\newcommand{\IP}[2]{\left\langle #1,#2 \right\rangle}
\newcommand{\reals}{\mathbb R}
\DeclareMathOperator{\E}{\mathbb E}
\DeclareMathOperator{\argmin}{arg\,min}
\DeclareMathOperator{\diag}{diag}
\theoremstyle{plain}
\newtheorem{theorem}{Theorem}[section]
\newtheorem{lemma}[theorem]{Lemma}
\theoremstyle{definition}
\theoremstyle{remark}
\newcommand{\trace}[1]{\mbox{tr}\left(#1\right)}
 \newcommand{\X}{\mathcal{X}}
\newcommand{\F}{\mathcal{F}}
\newcommand{\real}{\mathbb{R}}
\newcommand{\DD}{\mathcal{D}}
\newcommand{\OO}{\mathcal{O}}
\newcommand{\tOO}{\wt{\OO}}
\newcommand{\II}[1]{\mathds{1}\left\{#1\right\}}
\newcommand{\PP}[1]{\mathbb{P}\left[#1\right]}
\newcommand{\EE}[1]{\mathbb{E}\left[#1\right]}
\newcommand{\EEw}[2]{\mathbb{E}_{#2}\left[#1\right]}
\newcommand{\PPt}[1]{\mathbb{P}_t\left[#1\right]}
\newcommand{\EEt}[1]{\mathbb{E}_t\left[#1\right]}
\newcommand{\EEs}[1]{\mathbb{E}_s\left[#1\right]}
\newcommand{\PPcc}[2]{\mathbb{P}\left[\left.#1\right|#2\right]}
\newcommand{\EEc}[2]{\mathbb{E}\left[#1\left|#2\right.\right]}
\newcommand{\EEcc}[2]{\mathbb{E}\left[\left.#1\right|#2\right]}
\newcommand{\EEcct}[2]{\mathbb{E}_t\left[\left.#1\right|#2\right]}
\def\argmin{\mathop{\mbox{ arg\,min}}}
\newcommand{\siprod}[2]{\langle#1,#2\rangle}
\newcommand{\iprod}[2]{\left\langle#1,#2\right\rangle}
\newcommand{\norm}[1]{\left\|#1\right\|}
\newcommand{\pa}[1]{\left(#1\right)}
\newcommand{\wh}{\widehat}
\newcommand{\wt}{\widetilde}
\newcommand{\loss}{\ell}
\newcommand{\lambdamin}{\lambda_{\min}}
\newcommand{\hR}{\wh{R}}
\newcommand{\htheta}{\wh{\theta}}
\newcommand{\hTheta}{\wh{\Theta}}
\newcommand{\ttheta}{\wt{\theta}}
\newcommand{\tQ}{\wt{Q}}
\newcommand{\tS}{\widetilde{\Sigma}}
\newcommand{\tZ}{\widetilde{Z}}
\newcommand{\transpose}{^\mathsf{\scriptscriptstyle T}}
\definecolor{PalePurp}{rgb}{0.66,0.57,0.66}
\newcommand{\hL}{\wh{L}}
\newcommand{\clinexp}{\textsc{ContextEW}\xspace}
\newcommand{\linexp}{\textsc{LinExp3}\xspace}
\title{First- and Second-Order Bounds for Adversarial Linear Contextual Bandits}
\author[1]{Julia Olkhovskaya}
\author[2]{Jack Mayo}
\author[2]{Tim van Erven}
\author[3]{Gergely Neu}
\author[3]{Chen-Yu Wei}
\affil[1]{Department of Mathematics, Vrije Universiteit Amsterdam, Amsterdam, The Netherlands}
\affil[2]{Korteweg-de Vries Institute for Mathematics, University of Amsterdam, Amsterdam, The Netherlands}
  \affil[3]{AI group, DTIC, Universitat Pompeu Fabra, Barcelona, Spain}
 \affil[4]{MIT Institute for Data, Systems, and Society, Massachusetts Institute of Technology, Cambridge, MA, USA}
\begin{document}
\maketitle

\begin{abstract}
  We consider the adversarial linear contextual bandit setting, which
  allows for the loss functions associated with each of $K$ arms to change
  over time without restriction. Assuming the $d$-dimensional contexts are
  drawn from a fixed known distribution, the worst-case expected regret
  over the course of $T$ rounds is known to scale as $\tilde O(\sqrt{Kd
  	T})$. Under the additional assumption that the density of the contexts
  is log-concave, we obtain a second-order bound of order $\tilde
  O(K\sqrt{d V_T})$ in terms of the cumulative second moment of the
  learner's losses $V_T$, and a closely related first-order bound of order
  $\tilde O(K\sqrt{d L_T^*})$ in terms of the cumulative loss of the best
  policy $L_T^*$. Since $V_T$ or $L_T^*$ may be significantly smaller than
  $T$, these improve over the worst-case regret whenever the environment
  is relatively benign. Our results are obtained using a truncated version
  of the continuous exponential weights algorithm over the probability
  simplex, which we analyse by exploiting a novel connection to the linear
  bandit setting without contexts.
\end{abstract}

\section{Introduction}
The contextual bandit problem is a generalization of the multi-armed
bandit setting in which a learner observes relevant contextual
information before choosing an arm. The goal of the learner is to
minimize the excess cumulative loss of the chosen arms compared to the
best fixed policy for mapping contexts to arms. This framework addresses
a broad range of important real-world problems like sequential treatment
allocation \citep{Tewari2017}, online recommendation
\citep{BLLRS11} or online advertising \citep{LCLS10}, and
is actively used in practice \citep{1606.03966}. Numerous variants of the
setting have been studied, which differ in the assumptions they make
about the losses and the contexts. In this paper, we focus on the recently introduced setting of \citet{2020NO} where 
the contexts are finite-dimensional i.i.d.~random vectors, and the losses are time-varying linear
functions of the context that may potentially be generated by an
adversary. In this setting, the worst-case rate for the expected regret
is known to be $\tilde O(\sqrt{T})$ for time horizon $T$ \citep{2020NO}.

Our main contribution is to replace the worst-case rate by adaptive
bounds. Specifically, we obtain a bound of $\tilde O(\sqrt{V_T})$ in
terms of a quadratic measure of variance $V_T$ for the losses of the
algorithm, and a bound of $\tilde O(\sqrt{L_T^*})$, where $L_T^*$ is the
cumulative loss incurred by the optimal policy. Such bounds in terms of
$L_T^*$ or $V_T$ are generally referred to as \emph{first-order} and
\emph{second-order bounds}, respectively, and have been extensively
studied in the bandit literature. They
can lead to much stronger guarantees in the often realistic case when
$T$ is large, but the losses vary little or when there exists a policy
with very low cumulative loss.

Worst-case guarantees in terms of $T$ have first been proved for the
contextual bandit problem with finite policy classes by
\citet{AuerCFS02}, with further improvements by \citet{BLLRS11}. These
methods can deal with adversarial losses and contexts, but only work for
finite policy classes and have run-time scaling linearly with the size of
the class---which is generally unacceptable in practice.  This latter
challenge has been addressed by a line of work culminating in
\citet{pmlr-v32-agarwalb14}, which only requires access to an
optimization oracle over the policy class. Their results, however,
remain restricted to i.i.d.~contexts and losses. An alternative line of
work has been initiated by \citet{Aue02,CLRS11,APSz11}, who studied the
special case of i.i.d.~\emph{linear} loss functions with changing
decision sets. The case of i.i.d.~contexts and adversarial linear losses
has first been studied by \citet{2020NO}.

Improvements of worst-case guarantees of order $\sqrt{T}$ to first-order bounds scaling with $\sqrt{L_T^*}$ have been 
known for a variety of bandit settings since the works of \citet{S05,AAGO06}, and \citet{Neu15}. Regarding contextual 
bandits, the COLT 2017 open problem of
\citet{AKLLS17} asks for efficient algorithms that achieve first-order
bounds for large, but finite, policy classes, either when both contexts
and losses are i.i.d.\ or when both are fully adversarial. First to
answer the open problem were \citet{ABL18}, who obtained an optimal
first-order regret guarantee for adversarial losses and contexts, but
with an algorithm that is inefficient for large policy classes.
\citet{Foster202118907} provide the first efficient algorithm for the
non-adversarial setting where the loss function is fixed over time and
one has access to an oracle that can solve various optimization tasks
over the policy class. We improve on these works in terms of the
computational efficiency of our algorithm and by allowing the loss
function to vary adversarially over time, although we do rely on the
extra assumption that the loss functions are linear.

Another relevant framework is the adversarial linear bandit setting
(without contexts), where there also exist adaptive results
\citep{pmlr-v99-bubeck19b,NEURIPS2020_b2ea5e97, NEURIPS2020_15bb63b2}.
While conceptually related, an important distinction is that the linear
bandit setting assumes a fixed decision set, whereas reducing the linear
contextual bandit problem to a linear bandit problem requires the use of
decision sets that change as a function of the contexts.

\paragraph{Main Contributions.}
We consider a $K$-armed linear contextual bandit problem with
$d$-dimensional contexts over $T$ rounds. The contexts are assumed to be
drawn i.i.d., but the linear loss functions mapping contexts to losses
for the arms are chosen by an adaptive adversary. The aim of the learner
is to minimize their regret, which is the gap between the expected
cumulative loss of the learner and the expected cumulative loss of the
best fixed policy $\pi_T^*$ chosen in full knowledge of the sequence of
losses. In this setting, $\pi_T^*$ is known to be a linear classifier,
i.e.\ it chooses the arm with smallest predicted loss, where the
predictions are fixed linear functions of the context (see
Section~\ref{sec:preliminaries}). The goal is therefore to compete with
all linear classifiers. We first obtain the following second-order bound
on the expected regret
\begin{equation}\label{eqn:intro_second_order}
  R_T = \tilde{O}\Big(K\sqrt{d V_T} \Big),
\end{equation}
where $V_T$ is defined in \eqref{eqn:VT-def} as a measure of the
cumulative second moments of the losses for the arms played by the
algorithm. Following \citet{NEURIPS2020_15bb63b2}, we allow these
moments to be centered around optimistic estimates that can further
improve the bound when available or can simply be set to zero when they
are not. We further obtain a first order bound of the form
\begin{equation}\label{eqn:intro_first_order}
  R_T(\pi_T^*) = \tilde{O}\Big(
    K \sqrt{d L_T^*}
  \Big).
\end{equation}
The second-order bound is obtained using a truncated version of the
continuous exponential weights algorithm over the probability simplex,
similar to the algorithm for linear non-contextual bandits of
\citet{NEURIPS2020_15bb63b2}, and the first-order bound may be obtained
as a corollary. As discussed in Section~\ref{sec:computation}, the
computational complexity of this method is dominated by two steps that
together require $\tilde O(K^5) + (d/\epsilon)^{O(1)}$ per round for
approximation up to precision $\epsilon > 0$, which is computationally
feasible for moderate $K$ and $\epsilon$.
Both results are not strict improvements on the worst-case rate of
$\tilde O(\sqrt{KdT})$ by \citet{2020NO}: first, they have a slightly
worse dependence on $K$. We consider this a price worth paying for
the first adaptive bounds in this setting. Second,
they require the extra assumption that the distribution of the contexts
is \emph{log-concave}. Although log-concavity is weaker than assuming
the contexts follow e.g.\ (truncated) Gaussian distributions, we
conjecture that it may not
be necessary to obtain a computationally efficient algorithm. This
conjecture is based on the observation that there exists in fact an easy way to
obtain at least the first-order bound \eqref{eqn:intro_first_order}
without the log-concavity assumption, but with an algorithm that has no
hope of being efficiently implemented. As described in
Section~\ref{sec:myga}, this is possible by running the MYGA algorithm
\citep{ABL18} on $O(\frac{T}{dK^2})^{Kd}$ experts that cover the set of
linear classifiers to sufficient precision. The run-time of this
approach is prohibitive, because it scales linearly with the number of
experts, which is a large polynomial in~$T$.

\paragraph{Techniques.}
The LinExp3 method of \citet{2020NO} is based on an
adaptation of the classic Exp3 algorithm for regular multi-armed bandits
\citep{auer2002finite}. A natural approach would therefore be to replace
the Exp3 component in LinExp3 by a method with first-order guarantees
for the multi-armed bandit setting, but, as discussed in
Section~\ref{sec:exp3_discussion}, this leads to
difficulties controlling the variance.
Instead of building on Exp3, we therefore follow the
perhaps surprising approach of building our algorithm on \emph{continuous exponential weights} over the probability 
simplex \citep{HEK18}. In particular, our approach is based on a combination of the recently proposed techniques of 
\citet{NEURIPS2020_15bb63b2} for linear bandits with tools designed by \citet{2020NO} to deal with the contextual case. 

\paragraph{Outline.}
The rest of the paper is organized as follows. After describing the
setting in the next section, we state a formal version of the simple
first-order bound  that can be obtained using the MYGA algorithm
(Theorem~\ref{thm:myga_reduction}). This is followed by
Section~\ref{sec:results}, which states our main results corresponding
to the regret bounds in Equations~\ref{eqn:intro_second_order} and
\ref{eqn:intro_first_order}. Section~\ref{sec:analysis} then gives a
high-level overview of the proofs, with pointers provided to the details
in the appendix. Finally, Section~\ref{sec:discussion} concludes with
discussion.

\section{Preliminaries}\label{sec:preliminaries}
\paragraph{Notation}

Let $\Delta^{K}=\{w \in \real^K | w_1 \geq 0,\ldots, w_K \geq 0,
\sum_{a=1}^{K} w_a = 1 \}$ denote the $(K-1)$-dimensional probability
simplex. For any positive semi-definite matrix $M\in \real^{d\times d}$,
$\norm{v}_{M} = \sqrt{v^{\transpose}M v}$ denotes the corresponding
Mahalanobis norm, and for any positive integer $n$, we abbreviate $[n] =
\{1,\ldots,n\}$.

\subsection{Setting}
We consider the setting of \citep{2020NO}, in which there is an
interaction between a learner and an unknown environment. This
interaction proceeds in rounds indexed by $t\in [T]$, such that for each
$t$: 
\begin{itemize}
\item[1.] The environment commits to $[K]$ parameter vectors $\theta_{t,1},\dots,\theta_{t,K} \in \real^{d}$ without revealing any to the learner.
\item[2.] A context vector $X_{t} \in \real^{d}$ is drawn i.i.d. from some fixed distribution $\mathcal{D}$ according to $X_{t}\sim \mathcal{D}$, and revealed to the learner.
\item[3.] The learner commits to an action $A_{t}\in [K]$, and incurs
the loss $\ell_t(X_{t},A_{t})$, where $\ell_t(X,a)
=\iprod{X}{\theta_{t,a}}$.
\end{itemize}

The environment is allowed to randomize its choices of $\theta_{t,a}$.
These must be independent from the context $X_t$ in round $t$, but they
may depend on previous contexts $X_s$ and actions $A_s$ for $s < t$.

We write $\pi_t(a | X_t)$ for the policy of the learner in round $t$
conditional on observing context $X_t$, so that $A_t \sim \pi_t(X_t)$, and we use the following notation for the expected cumulative
losses of the algorithm and policy $\pi$, respectively:
\begin{align*}
  L_T =
  \mathbb{E}\left[\sum_{t=1}^{T} \ell_t(X_{t},A_{t})\right],
  L_T^{\pi} = 
  \mathbb{E}\left[\sum_{t=1}^{T} \ell_t(X_{t},\pi(X_{t}))\right].
\end{align*}
Let $\Pi$ be the set of all all stationary deterministic policies
$\pi : \real^{d}\rightarrow [K]$, we define the optimal policy $\pi^*$ as $\pi^*= \argmin_{\pi \in \Pi} L^{\pi}_T$.
Then the learner's goal is to compete with policy $\pi^*$, as measured by the expected regret:
\begin{align*}
R_{T}&=L_T - L_T^{\pi^*}
=\mathbb{E}\left[\sum_{t=1}^{T}\iprod{X_{t}}{\theta_{t,A_{t}}-\theta_{t,\pi^*(X_{t})}}\right],
\end{align*}
where the expectation is taken over each $X_{t}\sim\mathcal{D}$, and any
randomness applied by the learner or environment in their respective
choices. 
Using the linearity of the loss functions it can be shown that
the optimal policy is always a linear classifier \citep{2020NO}:
\[
  \pi^*_T(x) = \argmin_a \iprod{x}{\sum_{t=1}^T \E[\theta_{t,a}]}.
\]
We may therefore restrict attention to competing with policies of the
form
\begin{equation}\label{eqn:linear_classifier}
  \pi_\beta(x) = \argmin_a \iprod{x}{\beta_a}
  \qquad
  (\beta \in \reals^{K\times d}).
\end{equation}
For deriving our technical results, it will be useful to define
the filtration $\F_t = \sigma(\{X_s, A_s: s \le t\})$, and the notations
$\EEt{\cdot} = \EEcc{\cdot}{\F_{t-1}}$ and $\PPt{\cdot} =
\PPcc{\cdot}{\F_{t-1}}$.

\paragraph{Assumptions}

Following \citet{2020NO}, we assume that $\|X_t\| \leq \sigma$,
$\|\theta_{t,a}\| \leq R$ and $\ell_{t}(x,a)\in[-1,1]$ almost surely. In
addition, the covariance matrix $\Sigma=\mathbb{E}[XX^{\transpose}]$ of
the context distribution is assumed to be positive definite, with
smallest eigenvalue $\lambda_{\mathrm{min}}(\Sigma)>0$.

\subsection{An Inefficient Algorithm}\label{sec:myga} 

A first order bound for our problem can be obtained by instantiating the
MYGA algorithm of \citet{ABL18} for a set of
$\Theta(\frac{T}{K^2d})^{Kd}$ experts that cover the parameter space of
policies of the form \eqref{eqn:linear_classifier}, which is guaranteed
to contain the optimal policy $\pi^*_T$:
\begin{theorem}\label{thm:myga_reduction}
  Suppose that $0\le \ell_{t}(a,X_t) \le 1$ almost surely for all
  $a\in[K]$. Then, by instantiating MYGA with
  $\Theta(\frac{T}{K^2d})^{Kd}$ experts, it obtains the following
  first-order bound for the adversarial linear contextual bandit
  problem:
    \begin{equation}
            R_{T}= O\left( K\sqrt{dL_{T}^{*}\log T} +K^{2}d \log T \right).
    \end{equation}
\end{theorem}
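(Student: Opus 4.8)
The plan is to reduce the adversarial linear contextual bandit problem to an instance of the finite-expert adversarial contextual bandit problem solved by MYGA, and then invoke the first-order regret guarantee of \citet{ABL18} as a black box. The key idea is that although the class of linear classifiers \eqref{eqn:linear_classifier} is infinite, the optimal policy $\pi_T^*$ lives in a low-dimensional parametric family indexed by $\beta \in \reals^{K \times d}$, so a suitably fine discretization of the parameter space yields a finite expert set that approximates every relevant policy arbitrarily well. First I would construct an $\epsilon$-net over the space of parameter matrices $\beta$: since each $\beta_a$ lives in a ball of radius controlled by $R$ and $\sigma$, a grid of resolution $\epsilon$ in each of the $Kd$ coordinates produces $\Theta(1/\epsilon)^{Kd}$ candidate policies. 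Choosing $\epsilon = \Theta(K^2 d / T)$ gives exactly the claimed cardinality $N = \Theta(\tfrac{T}{K^2 d})^{Kd}$.

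\medskip
Next I would quantify the discretization error. The crucial point is that small perturbations of $\beta$ change $\pi_\beta(x)$ only on a set of contexts where two arms are nearly tied, i.e.\ near the decision boundaries $\iprod{x}{\beta_a - \beta_{a'}} = 0$. Using the bound $\|X_t\| \le \sigma$ and the assumption $\ell_t(x,a) \in [-1,1]$, I would show that replacing the optimal $\pi_T^*$ by its nearest grid point $\hat\pi$ costs at most $O(N^{-1/(Kd)} \cdot T) = O(\epsilon T)$ in expected cumulative loss, so that $L_T^{\hat\pi} \le L_T^{*} + O(K^2 d)$ for the chosen $\epsilon$. This additive slack is exactly the source of the lower-order $K^2 d \log T$ term in the stated bound. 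Then I would apply the MYGA first-order guarantee, which for $N$ experts and losses in $[0,1]$ yields a bound of the form $O\bigl(\sqrt{K L^{\hat\pi}_T \log N} + K \log N\bigr)$; substituting $\log N = \Theta(Kd \log(T/(K^2d))) = \tilde O(Kd)$ and the approximation bound on $L_T^{\hat\pi}$ produces the claimed $O\bigl(K\sqrt{d L_T^* \log T} + K^2 d \log T\bigr)$ after collecting terms.

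\medskip
The main obstacle I anticipate is the discretization-error step: controlling how much the cumulative loss of the optimal linear classifier can increase under a coarse grid. Unlike a Lipschitz argument on the loss values themselves (which are linear and hence well-behaved), the policy map $\beta \mapsto \pi_\beta$ is discontinuous, so one cannot directly bound $|L_T^{\hat\pi} - L_T^{*}|$ by the perturbation in $\beta$. The right approach is to bound the \emph{total probability mass} (under $\mathcal{D}$, summed over rounds) of contexts on which the grid policy and the optimal policy disagree, and then use that each such disagreement costs at most the margin $|\iprod{x}{\beta_a - \beta_{a'}}| \le 2\sigma\epsilon$. Because the losses are bounded in $[-1,1]$, this per-round margin bound, summed over $T$ rounds, gives the $O(\epsilon T) = O(K^2 d)$ slack without needing any structure on $\mathcal{D}$ beyond the almost-sure norm bound; notably this is where the argument avoids the log-concavity assumption required elsewhere in the paper. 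The remaining steps are routine once this net-approximation lemma and the MYGA black-box guarantee are in place.
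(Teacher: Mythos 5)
Your overall strategy coincides with the paper's proof: discretize the parameter space of linear classifiers into $\Theta(T/(K^2d))^{Kd}$ experts, bound the discretization error of the best grid policy by $O(K^2 d)$, and invoke MYGA's first-order guarantee as a black box; your final accounting ($\log|E| = \Theta(Kd\log T)$, additive slack $O(K^2 d)$) also matches. However, the justification you give for the discretization-error step has a genuine flaw. You claim that on a context $x$ where the grid policy and the optimal policy disagree, "each such disagreement costs at most the margin $|\langle x, \beta_a - \beta_{a'}\rangle| \le 2\sigma\epsilon$" per round, with $\epsilon = \Theta(K^2 d/T)$. Taken literally this is false: the loss difference in round $t$ is $\langle x, \theta_{t,a} - \theta_{t,a'}\rangle$, which involves the round-$t$ loss vectors rather than the parameter defining the policies, and it can be as large as $2$ no matter how fine the grid --- the adversary may make arm $a$ terrible in odd rounds and excellent in even rounds, with the effects cancelling in the aggregate. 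The margin controls only the \emph{sum} over rounds: writing $\beta^* = \E[\sum_{t=1}^T \theta_t]$ (so $\|\beta^*_a\| \le RT$) and using that contexts are i.i.d.\ and independent of the adversary's choices, the expected cumulative loss of any fixed policy $\pi$ equals $\E\left[\langle X_0, \beta^*_{\pi(X_0)}\rangle\right]$ for a ghost sample $X_0\sim\mathcal{D}$, and the paper runs the margin/argmin chain of inequalities directly on this linear function of $\beta^*$, covering the radius-$RT$ ball at the \emph{coarse} precision $\epsilon = dK^2/2$ and obtaining a total (not per-round) error of $2\sigma\epsilon = O(K^2 d)$. Your normalization (constant-radius ball at resolution $\Theta(K^2d/T)$) is equivalent after rescaling, since argmin classifiers are scale-invariant, so your numbers survive; but the argument must be phrased in terms of the aggregate $\beta^*$ --- the per-round attribution, and the appeal to $\ell_t \in [-1,1]$, are not what makes the bound work.

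Relatedly, the clause about bounding "the total probability mass of contexts on which the grid policy and the optimal policy disagree" is a red herring: no anti-concentration of $\mathcal{D}$ near decision boundaries is available (or needed). The disagreement probability is simply bounded by one, and the aggregate margin bound does all the work --- which is precisely why this reduction requires no log-concavity, as you correctly observe.
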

Although this provides a quick way to see that first-order bounds are
possible, the resulting algorithm is completely impractical, because its
run-time is proportional to the number of experts, which grows as a
large polynomial in $T$. The proof, including a more detailed
description of the experts, can be found in Appendix~\ref{myga_proof}.

\section{First- and Second-Order Bounds}\label{sec:results}

In this section we present an algorithm using a novel adaptation of methods developed for the adversarial linear bandit to be suitable for use in the adversarial linear contextual bandit setting. The method proposed is based on a form of continuous exponential weights that has been shown to lead to
a first-order bound in the former \citep{NEURIPS2020_15bb63b2}.
The algorithm allows for optimistic estimates $m_{t,a} \in \real^{d}$
for the environment's choices $\theta_{t,a}$, which can always be set to
$0$ when they are not available. We show two types of guarantees. First,
in Theorem~\ref{var_theorem}, we obtain a second-order regret bound in
terms of the cumulative squared error of the estimates $m_{t,a}$:
\begin{equation}\label{eqn:VT-def}
  V_T = \EE{  \sum_{s=1}^{T} \iprod{X_s}{\theta_{s,A_s} - m_{s,A_s}}^2}.
\end{equation}
Taking $m_{t,a} = 0$, this provides a second-order regret bound in terms
of the squared losses. Alternatively, $m_{t,a}$ may be estimated using
an online regression algorithm, as described by
\citet{NEURIPS2020_15bb63b2}. As our second result, we show in
Theorem~\ref{first_order} that a first-order bound can be derived for
the same algorithm with a different choice of hyperparameters and the
assumption that the losses are non-negative.

\begin{algorithm}[h]
	\caption{\clinexp}
	\label{alg:linexp3}
        \textbf{Parameters:}  $\gamma>0$, $\eta_1 \geq \ldots \geq
        \eta_T>0$, $m_1,\ldots,m_T$\\
	\textbf{For} $t = 1, \dots, T$:
	\begin{enumerate}
		\item Observe $X_t$.
		\item \textbf{Repeat:}\\
		\quad Pick $Q_t$ from the distribution $p_t$ defined in
               \eqref{exp3}, \textbf{until  }
		\begin{equation}\label{truncation_event}
			\sum_{a=1}^K \norm{Q_{t,a} X_t}^2_{	\Sigma_{t,a}^{-1}} \le dK\gamma^2,
		\end{equation} 
               where $\Sigma_{t,a}$ is defined in
               \eqref{eqn:Sigma_def}.
               \item Set $\tQ_t = Q_t$ equal to the last sample of
               $Q_t$, which caused the loop to exit, and choose an arm
               according to
              $
                 A_t \sim \tQ_{t}.
               $
		\item Observe the loss $\loss_t(X_t,A_t)$ and estimate
               $\htheta_{t,a}$ for all $a$ according to
               \eqref{eqn:theta_est}.
	\end{enumerate}
\end{algorithm}

\subsection{Algorithm Description}

Our full algorithm is shown in Algorithm~\ref{alg:linexp3}. As it is an
adaptation of continuous exponential weights for the contextual bandits
setting, we refer to it as \clinexp. It runs a two-stage sampling
procedure: after observing context $X_t$, the first stage of the
algorithm samples a random policy $\tQ_t \in \Delta^K$, and then the
second stage consists of drawing an arm $A_t$ randomly from $\tQ_t$. The
distribution of $\tQ_t$ is constructed as follows: first we sample a
different policy $Q_t$ from the exponential weights distribution over
the probability simplex with density proportional to
\begin{equation}\label{weights}
	w_t(q|X_t) = \exp\pa{- \eta_t \sum_{a=1}^K q_a\iprod{X_t}{\sum_{s=1}^{t-1} \htheta_{s,a} }}.
\end{equation}
The sum $\sum_{a=1}^K q_a\iprod{X_t}{\sum_{s=1}^{t-1} \htheta_{s,a} }$
estimates the cumulative loss that the policy $q$ would have incurred if
it had been played in all previous rounds. It relies on estimates
$\htheta_{s,a}$ of the loss vectors $\theta_{s,a}$, which will be
defined below, and a time-varying learning rate $\eta_t > 0$, which is
hyperparameter of the algorithm. The normalized density function
corresponding to the weights in \eqref{weights} is:
\begin{equation}\label{exp3}
	p_t(q|X_t) = \frac{w_t(q|X_t) }{\int_{\Delta^K } w_t(q|X_t) dq}.
\end{equation}
Following \citet{ NEURIPS2020_15bb63b2}, we then introduce a rejection
sampling step \eqref{truncation_event} to reduce the variance, which is
based on the following covariance matrices $\Sigma_{t,a}$ corresponding to
$Q_t$:
\begin{equation}\label{eqn:Sigma_def}
  \Sigma_{t,a} = \EEt{Q_{t,a}^2 X_t X_t\transpose},
\end{equation}
so that $\tQ_t$ ends up being sampled according to the following
truncated exponential weights density:
\begin{equation}\label{truncation}
	\tilde p_t(q|X_t) = \frac{p_t(q|X_t) \II{ \sum_{a=1}^K\norm{q_a
        X_t
        }^2_{\Sigma_{t,a}^{-1}}  \le dK\gamma^2} }{ P_t\Big(
        \sum_{a=1}^K\norm{q_a X_t }^2_{\Sigma_{t,a}^{-1}}   \le
        dK\gamma^2|X_t\Big)},
\end{equation}
with truncation level hyperparameter $\gamma>0$. We will show that all
$\Sigma_{t,a}$ are invertible, as are their analogues in which $Q_t$ is
replaced by $\tQ_t$:
\begin{equation}\label{eqn:tSigma_def}
\tS_{t,a} = \EEt{\tQ_{t,a}^2 X_t X_t\transpose}.
\end{equation}
It remains to specify our estimators for $\theta_{t,a}$, which are
defined as follows:
\begin{equation}\label{eqn:theta_est}
  \htheta_{t,a} = 
    m_{t,a} +
    \tQ_{t,a}\tS_{t,a}^{-1} X_t\pa{\siprod{X_t}{\theta_{t,a}}
    - \siprod{X_t}{m_{t,a}} }\II{A_t = a}.
\end{equation}
These estimates can be shown to be unbiased: 
\begin{align*}
	\E_t\Big[\htheta_{t,a}\Big] 
        &=  m_{t,a} + \tS_{t,a}^{-1} \EEt{\tQ_{t,a}  X_t
        X_t\transpose \II{A_t = a}}(\theta_{t,a}- m_{t,a}) \\
        &=  m_{t,a} + \tS_{t,a}^{-1} \EEt{\tQ_{t,a}^2   X_t
        X_t\transpose}(\theta_{t,a}- m_{t,a})
        =  \theta_{t,a}.
\end{align*}
 
\subsection{Results}
 
We instantiate \clinexp{} with adaptive learning rates $\eta_t$.
For our second-order result, these are defined in terms of the empirical
counterpart to $V_t$:
$
  \widehat{V}_t = \sum_{s=1}^{t} \iprod{X_s}{\theta_{s,A_s} -
  m_{s,A_s}}^2,
$
and we abbreviate
$
  G_t = 8 \sqrt{\widehat{V}_{t-1}  \ln(2T^2) + 144\ln^2 T  }  + 176\ln T.
$
Then we set
\begin{equation}\label{eqn:eta_second_order}
  \eta_t  =   (100dK\gamma^2 + d(\widehat{V}_{t-1} + 1 +
  G_{t-1}))^{-1/2}.
\end{equation}
This leads to the following second-order bound:
\begin{theorem}[Second-Order]\label{var_theorem}
      Suppose $\mathcal{D}$ has a log-concave density.
       Then, for $\gamma = 4 \log(10 d KT)$, $\eta_t$ as in
       \eqref{eqn:eta_second_order} and any $\F_{t-1}$-measurable
       estimates $m_t$,  the expected regret of \clinexp is at most
       $R_T = \widetilde{O} (K\sqrt{d V_T})$.
\end{theorem}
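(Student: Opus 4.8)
The plan is to read \clinexp as an instance of continuous exponential weights (EW) over the simplex $\Delta^K$ run on the estimated losses, and to import the second-order machinery of \citet{NEURIPS2020_15bb63b2} through the per-context reduction in which a policy $q\in\Delta^K$ together with the context $X_t$ plays the role of the linear-bandit action $q\otimes X_t\in\real^{Kd}$. First I would reduce the true regret to a regret on estimated losses. Writing $q^*_t$ for the vertex of $\Delta^K$ indexed by $\pi^*(X_t)$ and $\wh y_{t,a}=\siprod{X_t}{\htheta_{t,a}}$, the unbiasedness identity $\EEt{\htheta_{t,a}}=\theta_{t,a}$ established after \eqref{eqn:theta_est}, together with the fact that $\tQ_t$ and $q^*_t$ are determined before the arm $A_t\sim\tQ_t$ is drawn, gives
\[
  R_T = \EE{\sum_{t=1}^T \iprod{\tQ_t - q^*_t}{\wh y_t}}.
\]
Since the EW analysis applies most naturally to the untruncated law $p_t$, I would then replace $\tQ_t$ by a draw $Q_t\sim p_t$, absorbing the truncation bias $\iprod{\tQ_t-Q_t}{\wh y_t}$ into a lower-order term to be controlled once the truncation event is shown to be high-probability.

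Next I would run the standard EW/FTRL regret decomposition on $\iprod{Q_t - q^*_t}{\wh y_t}$, conditionally on $\F_{t-1}$ and $X_t$. With the optimistic correction built into $\htheta_{t,a}$, this yields a bound of the form
\[
  \sum_{t} \iprod{Q_t - q^*_t}{\wh y_t}
  \le \frac{\mathcal{E}_T}{\eta_T} + \sum_{t}\eta_t\, D_t,
\]
where $\mathcal{E}_T$ is the entropy/normalization penalty for competing against a smoothed version of the vertex $q^*_t$ of $\Delta^K$, which I expect to scale as $\widetilde O(K)$ (the simplex is $(K-1)$-dimensional), and $D_t$ is the per-round stability term, controlled by the local second moment of the centred estimate $q\mapsto\sum_a q_a\siprod{X_t}{\htheta_{t,a}-m_{t,a}}$ under the sampling distribution. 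The decisive algebraic step is to bound $D_t$ using the rejection-sampling constraint: the exit condition \eqref{truncation_event} forces $\sum_a \norm{\tQ_{t,a}X_t}^2_{\tS_{t,a}^{-1}}\le dK\gamma^2$, which, via the form of $\htheta$ in \eqref{eqn:theta_est}, caps the local norm of the estimator and reduces $D_t$ to a multiple of $d\,\EEt{\siprod{X_t}{\theta_{t,A_t}-m_{t,A_t}}^2}$, whose cumulative expectation is exactly $V_T$ by \eqref{eqn:VT-def}. The leading factor $100dK\gamma^2$ in the denominator of \eqref{eqn:eta_second_order} is precisely what keeps $\eta_t\,dK\gamma^2$ bounded so that the EW linearization ($e^x\le 1+x+x^2$) remains valid on the truncated support.

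The hardest part, and the place where log-concavity enters, is certifying that the rejection sampling behaves well. I must show that under $Q_t\sim p_t$ and $X_t\sim\mathcal D$ the quadratic form $\sum_a\norm{Q_{t,a}X_t}^2_{\Sigma_{t,a}^{-1}}$, whose conditional mean is exactly $\sum_a\trace{\Sigma_{t,a}^{-1}\Sigma_{t,a}}=Kd$, exceeds the threshold $dK\gamma^2$ only with probability $\widetilde O(1/T^2)$ for the stated $\gamma=4\log(10dKT)$. Because $p_t$ is log-concave on $\Delta^K$ (an exponential of a linear function restricted to a convex body) and $\mathcal D$ is assumed log-concave, I would invoke concentration of quadratic forms for log-concave measures to obtain exponential tails; the $\log$— rather than $\sqrt{\log}$—scaling of $\gamma$ reflects exactly these sub-exponential tails. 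The same concentration shows that $\Sigma_{t,a}$ and $\tS_{t,a}$ are invertible (using $\lambdamin(\Sigma)>0$ so that $v^{\transpose}\Sigma_{t,a}v=\EEt{Q_{t,a}^2\siprod{X_t}{v}^2}>0$) and mutually comparable, so that $\tilde p_t$ sits within a $1+\widetilde O(1/T)$ factor of $p_t$ and the deferred truncation-bias term is negligible.

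Finally I would close with the adaptive learning-rate tuning. Plugging in \eqref{eqn:eta_second_order}, so that $1/\eta_T=\widetilde O(\sqrt{d\widehat V_T})$ once $\widehat V_T$ dominates, makes the penalty $\mathcal E_T/\eta_T=\widetilde O(K\sqrt{d\widehat V_T})$, while a standard telescoping inequality of the form $\sum_t a_t/\sqrt{\sum_{s\le t}a_s}\le 2\sqrt{\sum_t a_t}$ turns the stability sum into $\widetilde O(\sqrt{d\widehat V_T})$, so the penalty dominates. Taking expectations and applying Jensen's inequality gives $\EE{\sqrt{d\widehat V_T}}\le\sqrt{dV_T}$; the correction term $G_t$ in the learning rate is exactly what makes this empirical-to-expected passage rigorous, via a Freedman-type bound on the deviation between $\widehat V_T$ and $V_T$. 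Combining the two contributions yields the claimed $R_T=\widetilde O(K\sqrt{dV_T})$.
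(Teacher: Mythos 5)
Your proposal follows the paper's high-level architecture (per-context reduction to the continuous exponential-weights analysis of \citet{NEURIPS2020_15bb63b2}, truncation controlled by log-concave concentration, Freedman-based adaptive tuning), but it has a genuine gap at its very first step, and it is precisely the step where the contextual problem differs from a fixed-decision-set linear bandit. The claimed identity $R_T = \EE{\sum_{t=1}^T \iprod{\tQ_t - q^*_t}{\wh y_t}}$ with $\wh y_{t,a}=\iprod{X_t}{\htheta_{t,a}}$ is false. Unbiasedness $\EEt{\htheta_{t,a}}=\theta_{t,a}$ holds only after integrating over the \emph{entire} round-$t$ randomness $(X_t,\tQ_t,A_t)$, because $\tS_{t,a}^{-1}$ is cancelled by $\EEt{\tQ_{t,a}^2 X_tX_t\transpose}$; conditionally on $(X_t,\tQ_t)$ the estimator is biased, $\EEcct{\htheta_{t,a}}{X_t,\tQ_t} = m_{t,a}+\tQ_{t,a}^2\tS_{t,a}^{-1}X_tX_t\transpose(\theta_{t,a}-m_{t,a}) \neq \theta_{t,a}$. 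Since the factor $\tQ_t-q^*_t$ in your inner product is itself a function of $X_t$ and $\tQ_t$, you cannot swap $\wh y_t$ for the true losses inside the expectation: for instance $\EEt{\iprod{\tQ_t}{\wh y_t}}$ produces third-moment terms of the form $\EEt{\tQ_{t,a}^3\, X_t\transpose\tS_{t,a}^{-1}X_t\,\iprod{X_t}{\theta_{t,a}-m_{t,a}}}$, which do not equal $\EEt{\tQ_{t,a}\iprod{X_t}{\theta_{t,a}}}$. The paper's resolution is the ghost-sample reduction (Lemma~\ref{ghost}, imported from Lemma~3 of \citet{2020NO}): the per-round true regret equals $\EEt{\iprod{Z_t(X_0)-Z^*(X_0)}{\htheta_t}}$ where $X_0\sim\DD$ is a fresh context \emph{independent} of the data used to build $\htheta_t$; one then bounds the per-context regret $\hR_T(x)$ almost surely for every fixed $x$ (Lemma~\ref{exp3proof_adaptive}) and only afterwards takes the expectation over $X_0$. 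Your proposal contains no analogue of this device, and running the EW decomposition ``conditionally on $\F_{t-1}$ and $X_t$'' cannot repair it: the comparator $q^*_t$ and the evaluation point of the losses change with $X_t$ every round, so the per-round inequalities do not telescope into a single online-learning regret. The ghost context also matters for your stability term: in the paper the variance couples the two contexts through $\trace{\tS_t^{-1}Z_t(X_0)Z_t(X_0)\transpose\tS_t^{-1}Z_t(X_t)Z_t(X_t)\transpose}$, and it is the expectation over $X_0$ that produces $\Sigma_t$, which is then compared with $\tS_t$ and bounded by $dK\gamma^2$ via the truncation event.

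A second, smaller gap: you justify the tail bound for the quadratic form by saying that $p_t(\cdot|x)$ is log-concave on $\Delta^K$ and $\mathcal D$ is log-concave. That alone does not suffice. Log-concavity of each conditional law together with log-concavity of the marginal of $X_t$ does not in general imply log-concavity of the law of the compound vector $Z_t(X_t)=(Q_{t,1}X_t,\dots,Q_{t,K}X_t)$, which is what Lemma~\ref{lem:logconc} requires. The paper proves this with a dedicated change-of-variables computation (Lemma~\ref{logconcavitylemma}) exploiting the exponential-of-linear form of the weights, so that in the $z$-coordinates the joint density becomes an explicit exp-linear function times $p_X$, visibly log-concave. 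Your remaining ingredients, namely the threshold probability $\widetilde O(1/T^2)$ for $\gamma=4\log(10dKT)$, the comparability $\frac{3}{4}\Sigma_t\preceq\tS_t\preceq\frac{4}{3}\Sigma_t$, and the Freedman plus telescoping treatment of the adaptive $\eta_t$, do match the paper's proof once these two missing steps are supplied.
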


To tune $\eta_t$ adaptively for our first-order bound, we define it
using the algorithm's empirical cumulative loss
$
  \widehat{L}_t = \sum_{s=1}^{t} \ell_t(X_s, A_s),
$
which acts as a self-confident empirical estimate of $L_T^*$. We further
abbreviate
\begin{equation}\label{learn_rate_1st}
  H_t = 8 \sqrt{2\hL_t \ln T + 40\ln^2T  }  + 72\ln T,
\end{equation}
and then set
\begin{equation}\label{eqn:eta_first_order}
  \eta_t  =   (100d\gamma^2 + dK(\hL_{t-1} + 1 + H_{t-1}))^{-1/2}.
\end{equation}
This leads to the following first-order bound:
 \begin{theorem}[First-Order]\label{first_order}
        Suppose that $\mathcal{D}$ has a log-concave density and that
        $0\le \ell_{t}(a,X_t) \le 1$ almost surely for all $a\in[K]$.
        Then, for $\gamma = 4 \log(10 d KT)$, $\eta_t$ as in
        \eqref{eqn:eta_first_order} and $m_t = 0$, the expected regret
        of \clinexp is at most $R_T = \widetilde{O}(K\sqrt{d L^*_T})$.
\end{theorem}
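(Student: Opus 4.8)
The plan is to obtain Theorem~\ref{first_order} from the second-order analysis underlying Theorem~\ref{var_theorem}, by exploiting a self-bounding relationship between the variance $V_T$ and the algorithm's own cumulative loss $L_T$. With the optimistic estimates set to $m_t = 0$, definition \eqref{eqn:VT-def} reduces to $V_T = \EE{\sum_{s=1}^T \ell_s(X_s,A_s)^2}$, and its empirical counterpart to $\widehat{V}_t = \sum_{s=1}^t \ell_s(X_s,A_s)^2$. The first step is to observe that under the extra assumption $0\le\ell_t(a,X_t)\le 1$ we have $\ell_s(X_s,A_s)^2\le\ell_s(X_s,A_s)$ pointwise, so that $\widehat{V}_t\le\hL_t$ and hence $V_T\le L_T$ with $L_T=\EE{\hL_T}$. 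In particular, the first-order tuning \eqref{eqn:eta_first_order} together with the deviation term $H_t$ of \eqref{learn_rate_1st} is precisely the analogue of the second-order tuning \eqref{eqn:eta_second_order} with the second-moment proxy $\widehat{V}_{t-1}$ replaced by its upper bound $\hL_{t-1}$.

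The second step is to re-run the core regret analysis behind Theorem~\ref{var_theorem} with this loss-based tuning. Since $\widehat{V}_t$ enters that analysis only through the per-round second-moment contribution (dominated by $\hL_t$ thanks to $\ell^2\le\ell$) and through the adaptive learning rate, the standard adaptive-step telescoping lemma — applied with the larger cumulative quantity $\hL_t\ge\widehat{V}_t$ as the denominator — produces, by the same steps, an intermediate bound of the form $R_T = \widetilde{O}\big(K\sqrt{d\,L_T}\big)$, now phrased in terms of the algorithm's own expected cumulative loss rather than $V_T$. The passage from the empirical $\hL_t$ appearing in the learning rate to its expectation $L_T$ is exactly what the deviation term $H_t$ is designed to handle, via the same Freedman-type martingale concentration used in the second-order proof.

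The final step is the self-bounding argument. Writing $L_T = R_T + L_T^*$ and substituting into the intermediate bound yields an inequality of the shape $R_T \le C\,K\sqrt{d\,(R_T + L_T^*)}$, where $C$ collects the polylogarithmic factors hidden in the $\widetilde{O}$. Squaring gives a quadratic in $R_T$; solving it and applying $\sqrt{a+b}\le\sqrt a+\sqrt b$ together with the AM--GM inequality decouples $R_T$ from the right-hand side and yields $R_T = \widetilde{O}\big(K\sqrt{d\,L_T^*} + K^2 d\big) = \widetilde{O}\big(K\sqrt{d\,L_T^*}\big)$, the additive $K^2 d$ term being lower order once $L_T^*$ is large.

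I expect the main obstacle to be the second step: one must verify that the entire second-order analysis genuinely commutes with the replacement $\widehat{V}\mapsto\hL$. The delicate points are (i) that the rejection-sampling truncation and the resulting control of $\sum_a\norm{\tQ_{t,a}X_t}^2_{\tS_{t,a}^{-1}}$ are unaffected, since these depend only on $\gamma$ and the log-concavity of $\mathcal{D}$ and not on the tuning; and (ii) that the adaptive-learning-rate telescoping and the concentration converting $\hL_t$ and $H_t$ into $L_T$ go through with the first-order constants — in particular, that the shifted placement of the factor $K$ between \eqref{eqn:eta_second_order} and \eqref{eqn:eta_first_order} is exactly what keeps the final rate at $K\sqrt{d\,L_T^*}$ rather than degrading it.
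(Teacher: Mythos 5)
Your proposal follows essentially the same route as the paper's own proof: with $m_t=0$ the paper also uses $\ell_t^2\le\ell_t$ to get $V_T\le L_T$, re-runs the second-order analysis with the loss-based learning rate and the same Freedman-type concentration relating $\hL_t$ to $L_t$, obtains the intermediate bound $R_T=\widetilde{O}(K\sqrt{dL_T})$, and finally solves the quadratic inequality arising from $R_T=L_T-L_T^*$. Your three steps, including the self-bounding argument at the end, are exactly the paper's argument, so the proposal is correct and not a different approach.
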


\subsection{Computational Efficiency}\label{sec:computation}

The two computational bottlenecks in the algorithm are the cost of
sampling from the output distribution $p_{t}(q|X_t)$ and computation of
the covariance matrices $\Sigma_{t,a}$ in each round.

Due to the log-linearity of our method, there exists several practical methods of sampling. As mentioned in \cite{NEURIPS2020_15bb63b2}, one can employ the methods of \cite{LV07}, which was shown in \cite{Lovasz2} to enjoy a bound of $O(K^4\mathrm{log}(1/\epsilon))$ (where $\epsilon$ is a bound on the total variation distance between the output distribution and the target), but this still requires knowledge of a density dominating the target distribution on all but a set with total starting mass $\leq \epsilon/2$. In \cite{rakhlin}, a method is developed for general log-concave distributions which, specialized to log-linear distributions (and without additional assumptions on the initial distribution) yields an $O(K^{3}\nu^2+\mathrm{log}(1/\epsilon))$ method when the geometry admits a $\nu$-self concordant barrier. Since there always exists a $K$-self-concordant barrier for a $K$-dimensional convex body, and thus the running time of this method for our problem is $O(K^{5}+\mathrm{log}T)$ up to a precision $\epsilon \sim \frac{1}{T^{\beta}}$ for some $\beta >0$.
As referred to in \cite{NEURIPS2020_15bb63b2}, the covariance matrix
$\Sigma_{t,a}$ is computable in $\OO((d/\epsilon)^{O(1)})$ sampling
steps drawing upon the results of \cite{LV07}.


\section{Analysis}\label{sec:analysis}
In this section we provide the analysis of \clinexp from which Theorems~\ref*{var_theorem} and~\ref*{first_order} follow. 
Throughout the analysis, we will be extensively using the following property of log-concave distributions:
\begin{lemma}\label{lem:logconc}
	If $x$ follows a log-concave distribution $p$ over $\real^d$ and $\EE{x x\transpose}\preccurlyeq I$, we have, for any $\alpha \ge 0:$
	\begin{equation}\label{logcocavity}
		\PP{\norm{x}^2_2 \ge d\alpha^2} \le d\exp(1-\alpha).
	\end{equation} 
\end{lemma}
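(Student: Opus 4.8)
The plan is to reduce the $d$-dimensional tail bound to a union of one-dimensional tail bounds over the coordinates, and then to establish a one-dimensional exponential tail estimate for log-concave laws.

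First I would observe that the event $\{\norm{x}_2^2 \ge d\alpha^2\}$ forces at least one coordinate to be large: if every coordinate satisfied $x_i^2 < \alpha^2$, then $\norm{x}_2^2 = \sum_{i=1}^d x_i^2 < d\alpha^2$. Hence $\{\norm{x}_2^2 \ge d\alpha^2\} \subseteq \bigcup_{i=1}^d \{|x_i| \ge \alpha\}$, and a union bound gives
\[
  \PP{\norm{x}_2^2 \ge d\alpha^2} \le \sum_{i=1}^d \PP{|x_i| \ge \alpha}.
\]
This already explains the factor $d$ in the statement and reduces everything to controlling a single coordinate. Two structural facts make each coordinate tractable: (i) by Pr\'ekopa's theorem every marginal of a log-concave density is again log-concave, so each $x_i$ has a one-dimensional log-concave law; and (ii) since $\EE{xx^\transpose}\preccurlyeq I$, the $i$-th diagonal entry gives $\EE{x_i^2} \le 1$.

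It then remains to prove the one-dimensional claim: if $Y$ is a real log-concave random variable with $\EE{Y^2}\le 1$, then $\PP{|Y|\ge \alpha}\le e^{1-\alpha}$ for all $\alpha \ge 0$. For $\alpha \le 1$ this is trivial because $e^{1-\alpha}\ge 1$, so I would focus on $\alpha > 1$ and split the analysis into two regimes. In the moderate regime, Markov's inequality applied to $Y^2$ gives $\PP{|Y|\ge\alpha} \le \EE{Y^2}/\alpha^2 \le 1/\alpha^2$, and an elementary check shows $1/\alpha^2 \le e^{1-\alpha}$ on a range of $\alpha$ starting at $1$. In the large-$\alpha$ regime I would invoke log-concavity proper: the survival function $\overline F(y)=\PP{Y\ge y}$ of a log-concave law is itself log-concave, so $-\log\overline F$ is convex and the hazard rate is non-decreasing. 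Consequently the tail decays at least exponentially beyond any reference point, at a rate controlled by the spread of the distribution, which in turn is pinned down by the constraint $\EE{Y^2}\le 1$; the analogous statement for the left tail follows by applying the same argument to $-Y$.

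The main obstacle I anticipate is obtaining the sharp constant, i.e.\ the precise ``$1$'' in the exponent $e^{1-\alpha}$ rather than some $e^{C-\alpha}$. The delicate points are that the coordinate laws need not be centered (the second-moment bound permits $\EE{x_i}\ne 0$, so one must convert the origin-centered tail into a mean-centered one without losing more than a constant) and that one must add the two one-sided tails while keeping the total below $e^{1-\alpha}$; here the Markov bound $1/\alpha^2$ is useful precisely in the regime where both tails could simultaneously carry mass, while the hazard-rate argument dominates once $\alpha$ is large. As an alternative that sidesteps the coordinate decomposition entirely, one could instead cite a direct radial concentration inequality for log-concave measures (in the spirit of Lov\'asz--Vempala), which bounds $\PP{\norm{x}_2 \ge \alpha\sqrt{\EE{\norm{x}_2^2}}}$ by $e^{1-\alpha}$ using $\EE{\norm{x}_2^2}=\trace{\EE{xx^\transpose}}\le d$; this would even remove the factor $d$, so either route suffices for the stated bound.
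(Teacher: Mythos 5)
Your reduction is sound as far as it goes: the union bound $\{\norm{x}_2^2 \ge d\alpha^2\} \subseteq \bigcup_i \{|x_i| \ge \alpha\}$ is correct, Pr\'ekopa does give log-concavity of the marginals, and $\EE{x_i^2}\le 1$ follows from the diagonal of $\EE{xx\transpose} \preccurlyeq I$. Note also that the paper itself supplies no proof at all --- it simply cites Lemma~1 of Ito et al.\ (2020) and Lemma~5.7 of Lov\'asz--Vempala --- so your closing fallback (invoke the radial concentration result directly, using $\EE{\norm{x}_2^2} \le d$) is precisely the paper's route, and it is indeed the cleaner one since it avoids the factor $d$. The genuine gap is in the only part of your main route that carries real content: the one-dimensional claim $\PP{|Y| \ge \alpha} \le e^{1-\alpha}$ in the regime where Markov no longer suffices (note $1/\alpha^2 \le e^{1-\alpha}$ only holds up to $\alpha \approx 3.5$).

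The hazard-rate sketch cannot close this gap as described, for a quantitative reason. Convexity of $-\log \bar F$, where $\bar F(y) = \PP{Y \ge y}$, only lets you extrapolate a decay rate that you have already certified on some interval $[y_1,y_2]$, namely the slope $\bigl(\log \bar F(y_1) - \log \bar F(y_2)\bigr)/(y_2-y_1)$. The constraint $\EE{Y^2}\le 1$ enters only through Markov, $\bar F(y_2) \le 1/y_2^2$, and you have no lower bound on $\bar F(y_1)$ beyond the trivial $\bar F(y_1)\le 1$; even in the most favourable case $\bar F(0)=1$ the certified rate is at most $\max_{y_2} 2\ln(y_2)/y_2 = 2/e < 1$, and bootstrapping the resulting tail bound back in as a new anchor leaves this rate fixed. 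So this argument yields at best $\exp\bigl(O(1) - (2/e)\alpha\bigr)$, which exceeds $e^{1-\alpha}$ for all $\alpha \gtrsim 4$. The mechanism that actually produces rate $1$ is different: for a \emph{nonnegative} log-concave $W$ one combines monotonicity of $c(y) = -\log \PP{W \ge y}/y$ with the integral identity $\EE{W} = \int_0^\infty \PP{W\ge y}\,dy$ to get $\PP{W \ge \alpha} \le e^{1-\alpha/\EE{W}}$. But your coordinate $Y = x_i$ is neither nonnegative nor centered, and $|Y|$ is in general \emph{not} log-concave (take $Y$ uniform on $[-1,3]$), so one must condition on the sign of $Y$ and then additionally prove that $\EE{Y \mid Y \ge 0} \le 1$ and $\EE{-Y \mid Y \le 0} \le 1$, i.e.\ $\EE{Y\II{Y\ge 0}} \le \PP{Y \ge 0}$; this does not follow from Cauchy--Schwarz and needs its own log-concavity argument (it is tight for the uniform distribution on $[-1,2]$). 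In short, the ``main obstacle'' you flagged is not a constant-chasing nuisance but the entire content of the lemma, and it remains unproven in your main route; only the citation route goes through as written.
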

This result was proven in Lemma~1 in \cite{NEURIPS2020_15bb63b2}, and also follows from Lemma 5.7 in \cite{LV07}.

First, we need to introduce some notation which will be useful for the reduction to the linear bandit setting and for the accompanying proofs. 
We denote $z_a(q,x) =  q_a x$  and $z(q,x) = (z_1(q,x), \dots,
z_K(q,x))\transpose$. We also define $\Sigma_t = \diag_{a\in[K]}(\Sigma_{a,t})$ as a block diagonal arrangement of the covariance matrices per arm. 
Using this notation, the distribution of the sampling algorithm (\ref{truncation}) may be rewritten as
\begin{equation}
	\tilde p_t(q|x) = \frac{p_t(q|x) \II{ \norm{z(q,x) }^2_{\Sigma_t^{-1}}  \le dK\gamma^2} }{ \PPt{ \norm{z(q,x) }^2_{\Sigma_t^{-1}}  \le dK\gamma^2}}.
\end{equation}
Let $\tQ_t(x) \sim \tilde p_t(q|x) $, $Q_t(x) \sim  p_t(q|x) $ and
$\tilde Z_t(x) = z(\tQ_t(x), x)$, $Z_t(x) = z(Q_t(x), x)$, $Z^*(x) = z(\pi^*(x), x)$. 
And we denote the aggregated loss parameter $\theta_t = (\theta_1, \dots, \theta_K)\transpose$ and its estimate $\htheta_t = (\htheta_1, \dots, \htheta_K)\transpose$.
Then we can express the regret as follows:
\begin{align}\label{regret_linbandit}
	R_{T} = \mathbb{E}\left[\sum_{t=1}^{T}\ell_t(X_{t},A_{t})-\ell_t(X_{t},\pi^*(X_{t}))\right]
	= \EE{ \sum_{t=1}^T \iprod{\tilde Z_t(X_t)-  Z^*(X_t)}{\theta_t}  }.
\end{align}
The crucial observation is that the log-concavity of the distribution of $Z_t(X_t)$ follows from that of the distribution of $X_t$:
\begin{lemma}\label{logconcavitylemma}
	Suppose $z(q,x)=\sum_{a}q_{a}\varphi(x,a)$ for $\varphi(x,a)=(\bar{0}^{\intercal},\dots,x^{\intercal},\cdots)$ such that $x$ is on the $da$'th co-ordinate and $Q(x) \sim p_t(\cdot|x)$ for $p_t(\cdot |x)$ defined in (\ref{exp3}). If $X \sim p_{X}(\cdot)$ and $p_{X}(\cdot)$ is log-concave and $Z(x)=z(Q_{t}(x),x)$, then $Z(X)$ also follows a log-concave distribution.
\end{lemma}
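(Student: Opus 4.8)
The plan is to reduce the claim to the standard closure properties of log-concave measures: that linear images and linear marginals (Prékopa's theorem) of log-concave measures are again log-concave, and that multiplying a log-concave density by a log-linear weight preserves log-concavity. Throughout I would write $c_a = \sum_{s<t}\htheta_{s,a}$ for the fixed, $\F_{t-1}$-measurable cumulative loss estimates, so that the un-normalised weight is $w_t(q|x) = \exp\pa{-\eta_t \sum_a q_a \iprod{x}{c_a}}$, and I would work from Borell's characterisation: a measure is log-concave iff it is supported on an affine subspace and has a log-concave Lebesgue density on that subspace.

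First I would establish the statement conditionally on $X=x$. For fixed $x$ the map $q\mapsto z(q,x)=(q_1 x,\dots,q_K x)$ is \emph{linear}, and the conditional density $p_t(\cdot|x)\propto w_t(\cdot|x)\,\II{q\in\Delta^K}$ is log-concave, being a log-linear function restricted to the convex set $\Delta^K$. Hence $Z\mid X=x$ is log-concave as a linear image of a log-concave law. The key observation for the next step is that, written in the image variable $z$, the weight becomes log-\emph{linear}: since $\sum_a q_a \iprod{x}{c_a}=\sum_a \iprod{q_a x}{c_a}=\iprod{z}{c}$ with $c=(c_1,\dots,c_K)$, we have $w_t(q|x)=\exp(-\eta_t\iprod{z}{c})$. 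This is precisely the structural payoff of passing to the linear-bandit variable $z$: the context density $p_X(\sum_a z_a)$ is log-concave in $z$ (composition with the linear map $z\mapsto\sum_a z_a$), the tilt $\exp(-\eta_t\iprod{z}{c})$ is log-linear in $z$, and the normaliser $N(x)=\int_{\Delta^K} w_t(q|x)\,dq$ is log-convex in $x$ (an integral of exponentials of functions linear in $x$), so $-\log N(\sum_a z_a)$ is concave in $z$. Thus every ingredient entering the density is log-concave once expressed in $z$.

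The hard part will be incorporating the randomness of $X\sim p_X$ without destroying log-concavity. The naive route — integrating the conditional laws $Z\mid X=x$ over $x$ — cannot work directly, since a mixture of log-concave measures need not be log-concave, and here each conditional law lives on a \emph{different} affine subspace $\{(\lambda_1 x,\dots,\lambda_K x):\sum_a\lambda_a=1\}$. One must genuinely exploit the coupling between $x$ and the support direction. Concretely, the change of variables $(x,q)\mapsto z(q,x)$ is bilinear rather than linear, so it both contributes a Jacobian factor to the density of $Z$ and maps onto the lower-dimensional, non-affine set $\{z:\, z_1,\dots,z_K \text{ parallel}\}$; reconciling these two features with the definition of a log-concave measure is the delicate point.

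To get past this, I would avoid arguing directly on the curved image and instead try to realise the law $\mu_Z$ as a linear marginal of an explicitly log-concave density on a suitable higher-dimensional product space, and then invoke Prékopa's theorem. The design goal for the lift is to arrange the Jacobian so that it enters as a log-affine factor that can be absorbed into the log-linear tilt $\exp(-\eta_t\iprod{z}{c})$, leaving the remaining factors (log-concavity of $p_X\circ(z\mapsto\sum_a z_a)$ and log-convexity of $N$) to combine routinely. Constructing this lift and checking that concavity survives the Jacobian is the crux of the argument; I expect this reconciliation of the bilinear change of variables with the log-concavity definition — not any of the individual log-concave/log-convex verifications — to be the step that requires the most care.
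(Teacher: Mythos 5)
Your proposal assembles the right ingredients---the passage to the variable $z$ in which the exponential-weights tilt becomes log-linear (since $\sum_a q_a\iprod{x}{c_a}=\iprod{z}{c}$), the log-concavity of $p_X\bigl(\sum_a z_a\bigr)$, and the log-convexity of the normalizer---and these are exactly the pieces the paper's proof is built from. But the proposal stops precisely where the proof has to start: the ``lift'' whose existence you defer to is never constructed, and that construction is not a routine final step but the entire content of the argument. The paper resolves the bilinear change of variables directly and explicitly: it introduces the map $h$ sending $(q_1\bar{1},\dots,q_{K-1}\bar{1},x)$ to $(z_1,\dots,z_{K-1},x)$ with $z_i^j=q_i x^j$, computes the upper-triangular Jacobian of $h^{-1}$ with determinant $\bigl(\prod_{i=1}^d 1/x^i\bigr)^{K-1}$, and---this is the step your plan has no counterpart for---absorbs that Jacobian factor into the fiber-wise normalization constant by re-expressing the normalizing integral over the image variable $z'$ rather than over $q'$, so that the Jacobian cancels between numerator and denominator. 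A second change of variables $g$, with unit Jacobian, then replaces the $x$-block by $z_K$, yielding the explicit density $p_Y(y)\propto \frac{e^{-\eta\iprod{y}{\hTheta_{t-1}}}}{\int_{y'\in Y(y)} e^{-\eta\iprod{y'}{\hTheta_{t-1}}}\,dy'}\,p_X\bigl(\sum_a y_a\bigr)$, which factors into log-concave pieces. Nothing in your write-up produces this formula or an equivalent of it.

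Moreover, the specific repair you propose is unlikely to work as stated. You want the Jacobian to ``enter as a log-affine factor that can be absorbed into the log-linear tilt,'' but the Jacobian here is $\bigl(\prod_{i=1}^d x^i\bigr)^{\pm(K-1)}=\exp\bigl(\pm(K-1)\sum_{i=1}^d\log x^i\bigr)$, which is log-concave (resp.\ log-convex) in $x$ but not log-affine, so it cannot be traded against the tilt $e^{-\eta_t\iprod{z}{c}}$; this is exactly why the paper instead folds it into the normalization constant, where it cancels because numerator and denominator are rewritten over the same fiber $Z(x)$. Nor can Pr\'ekopa's theorem be invoked on the natural candidate lift: the only canonical higher-dimensional object is the joint law of $(Q,X)$, and the map $(q,x)\mapsto z(q,x)$ is bilinear, not linear---the very obstruction you identify yourself. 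So while your diagnosis of the difficulty is accurate (you are, if anything, more explicit than the paper about the delicacy of the lower-dimensional, non-affine support of $Z(X)$), what you have written is a plan whose central step is missing, not a proof.
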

The proof of  this result is a rather straightforward computation of the
density of $Z_t(X_t)$ and can be found in Appendix~\ref{ap:proof}. To proceed, we  write regret as a sum of two terms
\begin{align}\label{regret_dec}
	R_t = \EE{ \sum_{t=1}^T \iprod{\tZ_t(X_t) - Z_t(X_t) }{\theta_t}} + \EE{   \sum_{t=1}^T \iprod{ Z_t(X_t) - Z^*(X_t) }{\theta_t} }.
\end{align}
Having shown that $Z_t(X_t)$ is log-concave, and since the log-concavity is preserved under linear transformations, for $y = \Sigma_t^{-1/2\transpose} Z_t(X_t)$ we can see that $\EE{y y\transpose} = I$, and thus by Lemma~\ref{lem:logconc} it immediately follows that the probability that (\ref{truncation_event}) is not satisfied is small for a proposed choice of $\gamma = 4 \log(10 d KT)$:
\begin{align*}
	 \PPt{\norm{
			Z_t(X_t)}_{\Sigma_t^{-1}}^2> dK\gamma^2} \le dK \exp(1-\gamma)
		 \le 3dK \exp(-\gamma) \le \frac{1}{6T^2}.
\end{align*} 
Using this observation, we show that the first term of (\ref{regret_dec}) is just $\OO(1)$, which is formally proved in Lemma~\ref{logconcconcentration} in the appendix.  

To control the second term of the regret decomposition (\ref{regret_dec}), consider the reduction of the contextual bandit problem to a combination of auxiliary online learning problems that are defined separately for each context, as proposed in \cite{2020NO}, Lemma 3. More details and a full proof can be found  in Appendix~\ref{ap:proof}. 
\begin{lemma}\label{ghost} Let $\pi^*$ be any fixed stochastic policy and let $X_0\sim\DD$ be a sample from the context distribution independent from $\F_T$. Suppose that $p_t \in \F_{t-1}$, such that $p_t(\cdot|x)$ is a probability density with respect to Lebesgue measure with support $\Delta^K$ and let $Q_t(x)\sim p_t(\cdot|x)$. Then,	\begin{align}\label{eq:ghost}
		&\EEt{  \iprod{Z_t(X_t) -  Z^*(X_t)}{\theta_t}  } = \EEt{   \iprod{Z_t(X_0) -  Z^*(X_0)}{\htheta_t}}.
    \end{align}
\end{lemma}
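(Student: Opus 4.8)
The plan is to establish the identity by computing the conditional expectation $\EEt{\cdot}$ of both sides and showing they agree, the two engines being the unbiasedness $\EEt{\htheta_{t,a}} = \theta_{t,a}$ established just before the lemma and the independence of the ghost sample $X_0$ (together with the fresh policy draw $Q_t(X_0)\sim p_t(\cdot|X_0)$) from every quantity generated in the real round $t$. Using the coordinate structure $z(q,x)=\sum_a q_a\varphi(x,a)$ from Lemma~\ref{logconcavitylemma}, both inner products split over arms, $\iprod{Z_t(x)-Z^*(x)}{\theta_t} = \sum_a (Q_{t,a}(x)-\pi^*_a(x))\iprod{x}{\theta_{t,a}}$, and likewise with $\htheta_t$ in place of $\theta_t$. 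Since $\theta_{t,\cdot}$ is independent of $X_t$ and of the learner's internal randomness but need not be $\F_{t-1}$-measurable, I would first carry out the whole argument conditionally on $(\F_{t-1},\theta_{t,\cdot})$ and take the outer expectation over $\theta_{t,\cdot}$ only at the very end.

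Starting from the right-hand side, for each arm $a$ introduce the ghost vector $g_a = (Q_{t,a}(X_0)-\pi^*_a(X_0))\,X_0$, which is a function of $X_0$ and the ghost draw $Q_t(X_0)$ alone. Because $X_0$ is independent of $\F_T$ and the ghost draw uses fresh randomness, $g_a$ is independent of $\htheta_{t,a}$ — a function of $X_t,A_t,\tQ_t$ and $\theta_{t,a}$ — conditionally on $(\F_{t-1},\theta_{t,\cdot})$; note that $\tS_{t,a}$ and $m_{t,a}$ are already $\F_{t-1}$-measurable, so they cause no coupling. Bilinearity of the inner product together with this conditional independence then yields $\EEt{\iprod{g_a}{\htheta_{t,a}}} = \iprod{\EEt{g_a}}{\EEt{\htheta_{t,a}}}$, and the unbiasedness identity replaces $\EEt{\htheta_{t,a}}$ by $\theta_{t,a}$.

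It then remains to evaluate $\EEt{g_a}$: conditioning on $X_0$ and using $\EE{Q_{t,a}(X_0)\mid X_0,\F_{t-1}} = \overline{p}_{t,a}(X_0)$ for the policy mean $\overline{p}_{t,a}(x)=\int q_a\,p_t(q|x)\,dq$, we obtain $\EEt{g_a} = \EEw{(\overline{p}_{t,a}(X_0)-\pi^*_a(X_0))\,X_0}{X_0\sim\DD}$. Summing over $a$ and pairing with $\theta_{t,a}$ reproduces exactly the conditional expectation of the left-hand side, since the identical manipulation applied directly to it gives $\sum_a \EEw{(\overline{p}_{t,a}(X_t)-\pi^*_a(X_t))\iprod{X_t}{\theta_{t,a}}}{X_t}$, and $X_0\stackrel{d}{=}X_t$ with both independent of $(\F_{t-1},\theta_{t,\cdot})$. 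Averaging over $\theta_{t,\cdot}$ given $\F_{t-1}$ closes the argument.

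The main obstacle is the measurability and independence bookkeeping in the middle step: one must verify that the ghost sample is genuinely independent of the real-round estimator given $\F_{t-1}$ — so that the inner-product expectation factorizes — while simultaneously tracking that $\theta_{t,\cdot}$ is random given $\F_{t-1}$, so that the unbiasedness of $\htheta_t$, which is an identity only conditional on $\theta_{t,\cdot}$, is invoked legitimately before the final average over $\theta_{t,\cdot}$.
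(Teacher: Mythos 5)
Your proof is correct and takes essentially the same route as the paper's: the paper's three-line tower-property argument conditions on $X_0$, exploits the conditional independence of the ghost draw $(X_0, Q_t(X_0))$ from the round-$t$ estimator to invoke the unbiasedness $\EEt{\htheta_t}=\theta_t$, and then uses $X_0 \stackrel{d}{=} X_t$ with both independent of $\F_{t-1}$ --- precisely the three ingredients you use, just without your per-arm decomposition and explicit policy means $\overline{p}_{t,a}$. Your additional bookkeeping (carrying out the argument conditionally on $\theta_{t,\cdot}$ and averaging it out at the end) is a careful treatment of a point the paper's proof leaves implicit.
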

To see why this would be useful further in the proof, we interpret the right-hand side of (\ref{eq:ghost}) as follows. Consider the online learning problem for a fixed $x$ with the decision set to be $\Delta^K$ and losses $\ell_{t}(x,q) = \siprod{z(q, x)}{\htheta_t}$ and consider running a version of a contextual bandit problem with a fixed context $x$, such that the probability of an action $q$ defined as in Equation~\ref{exp3}, so $p_t(q|x) \propto \exp\pa{- \eta_t \sum_{a=1}^K q_a\iprod{x}{\sum_{s=1}^{t-1} \htheta_{s,a} }}$.  Then, the regret for the fixed $x$ against $\pi^*(x)$ can be written as:
\[\hR_T(x) = \sum_{t=1}^T \EEw{\siprod{z(Q_t(x), x) - z(\pi^*(x),x)}{\htheta_t}}{Q_t(x) \sim p_t(\cdot|x)}.\]
Then it is easy to see that the right-hand side of  (\ref{eq:ghost}) is equal to $\EE{\hR_T(X_0)}$. Thus, we first show a bound on $\hR_T(x) $ that holds almost surely for any $x$ and then take an expectation with respect to $X_0$. 
We control the regret $\hR_T(x)$ by following the general schema of the optimistic mirror descent analysis developed in \citep{RS13nips, NEURIPS2020_15bb63b2}. With this analysis, we get the following bound for any $x\in \X$ : 
\begin{lemma}\label{exp3proof_adaptive} 
	 Assume that $\eta_{t+1}\le \eta_{t}$ for all $t$, let $q_0$ be a uniform distribution over $[K]$ and $\psi(y) = \exp(y) -y -1$. 
	Then, the regret $\hR_T(x)$ of \clinexp almost surely satisfies
	\begin{align}\label{regret_x}
		\hR_T(x) & \le \frac{1}{T}\sum_{t=1}^T  \iprod{ z(q_0 - \pi^*(x) , x) }{\ \htheta_t} + \frac{K \log T}{\eta_T} \nonumber\\
		& + \sum_{t=1}^T   \frac{1}{\eta_t} \EEw{  \psi\pa{ -\eta_t \iprod{z(Q_t(x),x)}{\htheta_t - m_t} }}{Q_t(x) \sim p_t(\cdot|x)},
	\end{align}
for $\psi(y) = \exp(y) -y -1$.
\end{lemma}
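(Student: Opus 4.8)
The plan is to treat the fixed-context problem as an instance of exponential weights with linear losses over the continuous decision set $\Delta^K$, and to run the standard optimistic mirror descent / FTRL machinery of \citet{RS13nips,NEURIPS2020_15bb63b2} adapted to the continuous simplex and a point comparator. Writing the (linear) surrogate loss of a policy $q\in\Delta^K$ at round $t$ as $\hloss_t(q) = \iprod{z(q,x)}{\htheta_t}$, the played density is $p_t(\cdot|x)\propto\exp\pa{-\eta_t\sum_{s<t}\hloss_s(\cdot)}$ with uniform initialization $p_1$, and $\hR_T(x) = \sum_t \EEw{\hloss_t(Q_t(x)) - \hloss_t(\pi^*(x))}{Q_t(x)\sim p_t(\cdot|x)}$.

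First I would handle the fact that $\pi^*(x)$ is a vertex of $\Delta^K$, at which the entropic regularizer blows up, by competing instead with the smoothed comparator $\tilde q^* = (1-\tfrac1T)\pi^*(x) + \tfrac1T q_0$. Since $\hloss_t$ is linear, $\hloss_t(\pi^*(x)) = \hloss_t(\tilde q^*) + \tfrac1T\iprod{z(\pi^*(x)-q_0,x)}{\htheta_t}$, so the regret against $\pi^*(x)$ equals the regret against $\tilde q^*$ plus exactly the first term $\tfrac1T\sum_t\iprod{z(q_0-\pi^*(x),x)}{\htheta_t}$ of \eqref{regret_x}. This smoothing pulls the comparator into the interior, at distance $\gtrsim 1/(TK)$ from every face, which is what makes the subsequent penalty finite.

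Next I would bound the regret against $\tilde q^*$ by the usual penalty-plus-stability decomposition. To obtain a finite penalty I would compete not against the point $\tilde q^*$ but against the distribution $p^*$ that is uniform on a Euclidean ball of radius $r\sim 1/T$ around $\tilde q^*$ (contained in $\Delta^K$ by the previous step); by symmetry and linearity $\E_{p^*}[\hloss_t] = \hloss_t(\tilde q^*)$, so this substitution is exact and costs nothing. The FTRL penalty is then governed by the relative entropy of $p^*$ against $p_1$, a purely volumetric quantity $\mathrm{KL}(p^*\|p_1) = \log\frac{\mathrm{vol}(\Delta^K)}{\mathrm{vol}(\text{ball})} = O(K\log T)$ for $r\sim 1/T$. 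Because $\eta_t$ is nonincreasing and the regularizer is nonnegative, the per-round penalties telescope into a single term bounded by $\mathrm{KL}(p^*\|p_1)/\eta_T = O(K\log T)/\eta_T$, matching the second term of \eqref{regret_x}.

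Finally, the stability term: each per-round contribution is a Bregman divergence of the conjugate (log-partition) potential between the one-step-ahead iterate and the realized iterate, which for the exponential-weights potential with linear losses I would evaluate via the expansion $\E[e^Y] = 1 + \E[Y] + \E[\psi(Y)]$ together with $\log(1+u)\le u$. This yields the summand $\tfrac1{\eta_t}\EEw{\psi\pa{-\eta_t\iprod{z(Q_t(x),x)}{\htheta_t - m_t}}}{Q_t(x)\sim p_t(\cdot|x)}$, where the prediction $m_t$ enters precisely through its use in forming the one-step-ahead iterate. I expect the main obstacle to be exactly this last step together with the penalty computation: the continuous simplex replaces the familiar finite-expert entropy by the volumetric argument above, the decreasing learning rate requires the telescoping to be executed so that only $1/\eta_T$ (and not $\sum_t 1/\eta_t$) multiplies the $O(K\log T)$ regularizer, and the $\psi$-centering at $m_t$ must be reconciled with the displayed iterate while keeping $\psi$ finite, which in turn relies on the boundedness enforced by the rejection-sampling event \eqref{truncation_event}. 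Once the reduction to exponential weights over $\Delta^K$ is in place these are bookkeeping rather than conceptual difficulties, so I expect the bulk of the effort to lie in the penalty and stability accounting.
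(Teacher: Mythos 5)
Your proposal follows the paper's own proof in all essentials: optimistic continuous exponential weights over $\Delta^K$, a comparator smoothed toward $q_0$ paying a volumetric $O(K\log T)/\eta_T$ penalty, the expansion $\exp(y)=1+y+\psi(y)$ combined with $\log(1+u)\le u$ for the stability term, and a telescoping argument under nonincreasing $\eta_t$ (which the paper implements concretely via a power-mean/Jensen inequality relating the potentials $U_t$ and $V_t$, exactly the step your ``nonnegative regularizer'' invocation must expand into). The only slip is quantitative: your Euclidean ball around $\tilde q^* = (1-\frac{1}{T})\pi^*(x)+\frac{1}{T} q_0$ must have radius of order $1/(TK)$, not $1/T$, to be contained in $\Delta^K$, giving a penalty $O(K\log(TK))/\eta_T$; the paper instead mixes uniformly over the shrunken simplex $(1-\frac{1}{T})\pi^*(x)+\frac{1}{T}\Delta^K$, which fits automatically, needs only Jensen and linearity to collapse to the point comparator, and recovers the stated constant $K\log T$ exactly.
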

 We place the derivation of the this bound in the appendix.  The crucial ingredient is to show that the square of the estimated loss can be bounded by the square of the true loss. Using the definition of $\theta_t$, denoting $Var_t = \trace{ \tS_t^{-1} Z_t(X_0)Z_t(X_0)\transpose \tS_t^{-1} Z_t(X_t)Z_t(X_t)\transpose   }$, we get  
\begin{align}\label{var_decomp}
		\EEt{\pa{- \eta_t \iprod{Z_t(X_0)}{\htheta_t- m_t} }^2 } =	\EEt{\eta^2_t \pa{ \ell_t(A_t, X_t) - X_t\transpose m_{t,A_t}}^2 Var_t},
\end{align}
As additional corollary of the concentration result for log-concave random variables, we can show the following relation between matrices $\Sigma_t$ and $\tS_t$:
\begin{equation}\label{cov_matrices_relation}
		\frac{3}{4} \Sigma_t \preceq  \tS_t \preceq \frac{4}{3} \Sigma_t,
\end{equation}
which we prove in Lemma~\ref{logconcconcentration} in the appendix. 
 Then we can show that, almost surely:
\begin{align}\label{quadratic}
	&\EEw{Var_t}{X_0}  = \EEw{\trace{ \tS_t^{-1} Z_t(X_0)Z_t(X_0)\transpose \tS_t^{-1} Z_t(X_t)Z_t(X_t)\transpose   }}{X_0}  \nonumber\\
	& \quad = \trace{ \tS_t^{-1} \Sigma_t \tS_t^{-1} Z_t(X_t)Z_t(X_t)\transpose   }  \le \frac{4}{3}	 \trace{ \tS_t^{-1} \tS_t \tS_t^{-1} Z_t(X_t)Z_t(X_t)\transpose   } \nonumber  \\
	& \quad = \frac{4}{3}	  Z_t(X_t)\transpose   \tS_t^{-1} Z_t(X_t)   \le  Z_t(X_t)\transpose   \Sigma_t^{-1} Z_t(X_t)   \le dK  \gamma^2.
\end{align}
where the first inequality follows from  (\ref{cov_matrices_relation}) and the second inequality is immediate from (\ref{cov_matrices_relation}) and the fact that for symmetric positive definite matrices $A\succeq B$ follows from $B^{-1}\succeq A^{-1}$. The last inequality follows from (\ref{truncation_event}) in the \clinexp. So, from (\ref{var_decomp}) and (\ref{quadratic}), we get 
\begin{align*}
	\EEt{\pa{- \eta_t \iprod{Z_t(X_0)}{\htheta_t- m_t} }^2 } \le  dK  \gamma^2 \EEt{\eta^2_t \pa{ \ell_t(A_t, X_t) - X_t\transpose m_{t,A_t}}^2 }, 
\end{align*}
which, as we stated above, is the key step to prove Theorem~\ref{var_theorem}.
\paragraph{First-order regret bound}
To prove result of Theorem~\ref{first_order}, we show that the bound in the Theorem~\ref{var_theorem} can instantiated to obtain a first-order regret bound with a different choice of the learning rate $\eta_t$. Going along the same lines with regard to the concentration of $\hL_t$ as for $\widehat{V}_t$, by setting $m_t = \bar{0}$ and noticing that then $V_T \le L_T$ we get
\begin{align*}
	R_T &\le 2dK  \gamma^2 \EE{\sum_{t=1}^T \eta_t	  \ell_t(A_t, X_t)^2 } + \tOO(K\sqrt{d V_T})
\le 4\sqrt{d} K \gamma^2 \sqrt{L_T} + \tOO(K\sqrt{d L_T}).\nonumber 
\end{align*}
Since $R_T = L_t -  L_T^*$, by solving the quadratic inequality with respect to $L_T^*$, we get that $L_T \le L_T^* + \tOO(K\sqrt{d}) $, yielding the final bound. 

\section{Discussion}\label{sec:discussion}
In conclusion, by applying the approach of \citep{NEURIPS2020_15bb63b2} we have constructed the first scheme achieving $\tilde{O}\left(K\sqrt{dL_{T}^{*}}\right)$ regret with a runtime of 
 $\OO\left((K^5+\mathrm{log}T) \cdot g_{\Sigma}\right)$, where $g_{\Sigma}$
 is the time taken to construct the covariance matrix per round - a potentially large polynomial improvement over the $\OO\left(T^{Kd}\right)$ runtime of MYGA. The application of linear bandit algorithms to the contextual bandit problem constitutes, to the best of our knowledge, a novel approach. In doing so we've found a number of positive aspects, including efficiency, but also the direct applicability of other properties enjoyed by the algorithm such as second order bounds \citep{NEURIPS2020_15bb63b2}.
 
Our approach is based on reducing the linear contextual bandit problem
to a linear bandit problem, as opposed to a multi-armed bandit problem
as in \citep{2020NO}. While the specifics of this reduction heavily
relied on the joint log-concavity of the context distributions and the
exponential-weights posterior over the simplex of actions, we wonder if
such approaches can be successfully applied to achieve other types of
improvements for linear contextual bandits. In particular, it is curious
to what extent other recent advances in the linear bandit problem can be
translated to the linear contextual bandit setting. Note that, while the
truncation step in \ref{alg:linexp3} has an insignificant computational
cost as the condition is satisfied with probability $\OO(1-1/T)$, it can
be removed by paying a $\log(1/\lambda_{min}(\Sigma))$ multiplicative
term in the regret by implementing additional exploration with
probability $1/T$. It is natural to ask whether or not
approaches based on other instantiations of Online Mirror
Descent would also yield first-order bounds, and possibly improve the
dependence on~$K$. The answer is not obvious: for an example of how a
naive application of an instantiation of FTRL fails to achieve a
first-order bound, see Appendix~\ref{sec:exp3_discussion}.

A relevant question pertains to whether or not such an application of algorithms for linear bandits is necessary at all, but standard approaches such as direct adaptation of Exp3, and first-order adaptations thereof such as GREEN \cite{AAGO06} do not seem to give the desired result.
In addition, thresholding the worst performing arms inevitably biases the loss estimator due to undersampling of those 
arms for which the threshold has been applied, and the resulting additional bias term picked up in the regret scales 
with $1/\lambda_{\mathrm{min}}(\Sigma_{t,a})$, which may be arbitrarily large. Another standard approach of finding an optimistic estimator yielded no fruit during the course of this study due to the lack of the existence of such an 
estimator without saving all previous losses explicitly.

Our algorithm achieves the regret bound $\OO(K\sqrt{dV_T})$, while the
worst case guarantee of \linexp of \cite{2020NO} is $\OO(\sqrt{dK T})$.
This discrepancy is not surprising as the Algorithm~1 of
\cite{NEURIPS2020_15bb63b2} scales as $\OO(n\sqrt{T})$ ($n$ being the
dimension of the action space for the linear bandit), which arises from
the deployment of continuous exponential weights. MYGA achieves the same $\OO(K\sqrt{dL_{T}^{*}})$ bound due to the number of experts needed to cover the joint set of
additive loss parameters. It is worth here emphasising that no known algorithm achieves a better dependence on $K$ than $\OO(K\sqrt{dL_{T}^{*}})$ for the linear adversarial contextual bandit problem. Meanwhile, if the linear bandit is played on
the $n$-simplex, an improvement to $\sqrt{nT}$ is possible. For further discussion of this point, see Section~28.5 of \cite{lattimore_szepesvari_2020}. It is thus still unclear whether or not the extra factor of $\sqrt{K}$ is necessary if one aims for a first-order bound.

An additional point is that while the MYGA algorithm \cite{ABL18} allows for adversarially chosen contexts, the analysis 
of MYGA for our setting relies heavily on the assumption that contexts are drawn i.i.d.~at each iteration. A natural 
question is then whether or not a similar result is achievable in the adversarial context case. It is known that 
achieving sub-linear regret is not possible even for full-information online learning of one-dimensional threshold 
classifiers when both contexts and losses are adversarial \citep{BPS09, SKS16}, which renders sub-linear regret 
similarly impossible to guarantee for the even harder setting that we consider in this paper. However, we do conjecture 
that we could overcome the assumption that the distribution is known or that we can sample from it by employing a more 
elaborate algorithm to estimate the distribution from the data. Indeed, it is not obvious if the distributional 
assumption of a lower bound to the covariance matrix eigenvalues is entirely necessary, since the regret does not depend 
on this.

Lastly, it would be an interesting challenge to see if a high-probability regret bound could be obtained in the form 
stated in the COLT 2017 open problem \citet{AKLLS17} for this setting, but since a high-probability $O(\sqrt{T})$ has 
not yet been proved for the problem here considered, the latter may be more worthy of focus in the short term.


\bibliography{refs}
\bibliographystyle{apalike}

\newpage
\appendix
\onecolumn
\allowdisplaybreaks
\section{First-order Bound by Reduction to MYGA}\label{myga_proof}

\begin{proof} 

The MYGA algorithm of \citet{ABL18} competes with a class of experts
$E$, where each expert $e \in E$ provides a stochastic prediction
$\xi_t^e \in \Delta_K$ in each round $t$. It provides the following
expected regret bound with respect to the best expert:
\begin{equation}\label{eqn:myga_regret}
		R_{T}= O\left(\sqrt{K\mathrm{log}(|E|+T)L^{*}_{T}}+K\mathrm{log}(|E|+T)\right).
\end{equation}
Losses for the arms can be adversarial, and are assumed to take values
in $[0,1]$.

We will instantiate the experts to cover the parameter space $\{\beta
\in \reals^{K \times d} : \max_a \|\beta_a\| \leq RT\}$ of potentially
optimal parameters for deterministic policies of the form
\eqref{eqn:linear_classifier}, which we know must contain the optimal
policy $\pi_T^*$ with corresponding parameters $\beta^* =
\E[\sum_{t=1}^T \theta_t]$. The covering number for a ball of radius
$RT$ at precision $\epsilon > 0$ is between $\big(\frac{
RT}{\epsilon}\big)^d$ and $\big(\frac{3 RT}{\epsilon}\big)^d$, so by
taking the Cartesian product of this covering with itself $K$ times we
can cover all $\beta$ with $\big(\frac{RT}{\epsilon}\big)^{Kd} \leq |E|
\leq \big(\frac{3 RT}{\epsilon}\big)^{Kd}$ points
$\beta^1,\ldots,\beta^{|E|}$. Let $\ddot\beta \in
\{\beta^1,\ldots,\beta^{|E|}\}$ be the closest point in the covering to
the optimal parameters $\beta^*$. Then its expected approximation error
can be upper bounded as follows:
\begin{align*}
\mathbb{E}\left[\sum_{t=1}^{T}\IP{X_{t}}{\theta_{t,\pi_{\ddot \beta}(X_{t})}}-\IP{X_{t}}{\theta_{t,\pi_{\beta^*}(X_{t})}}\right] &=\mathbb{E}\left[\sum_{t=1}^{T}\IP{X_{0}}{\theta_{t,\pi_{\ddot \beta}(X_{0})}}-\IP{X_{0}}{\theta_{t,\pi_{\beta^*}(X_{0})}}\right] \\ 
		&=\mathbb{E}\left[\IP{X_{0}}{\beta^*_{\pi_{\ddot
                \beta}(X_{0})}}-\IP{X_{0}}{\beta^*_{\pi_{\beta^*}(X_{0})}}\right] \\
                &\leq \mathbb{E}\left[\IP{X_{0}}{\ddot \beta_{\pi_{\ddot
                \beta}(X_{0})}}-\IP{X_{0}}{\beta^*_{\pi_{\beta^*}(X_{0})}}\right]
                + \sigma \epsilon\\
                &= \mathbb{E}\left[\min_a \IP{X_{0}}{\ddot
                \beta_a}-\IP{X_{0}}{\beta^*_{\pi_{\beta^*}(X_{0})}}
                 \right]
                + \sigma \epsilon\\
                &\leq \mathbb{E}\left[\IP{X_{0}}{\ddot
                \beta_{\pi_{\beta^*}(x_{0})}}-\IP{X_{0}}{\beta^*_{\pi_{\beta^*}(x_{0})}}
                 \right]
                + \sigma \epsilon\\
                &\leq \mathbb{E}\left[\IP{X_{0}}{
                \beta^*_{\pi_{\beta^*}(x_{0})}}-\IP{X_{0}}{\beta^*_{\pi_{\beta^*}(x_{0})}}
                 \right]
                + 2\sigma \epsilon\\
                &= 2\sigma \epsilon.
	\end{align*}
        Adding this to \eqref{eqn:myga_regret}, instantiated with $|E| \leq
        \big(\frac{3 RT}{\epsilon}\big)^{Kd}$, and choosing $\epsilon =
        \frac{dK^{2}}{2}$ completes the proof.
\end{proof}

\section{Auxiliary lemmas}

To ensure that step 2 in \clinexp  is defined correctly, we show that the matrix $\Sigma_t$ is full rank:
\begin{lemma} Let the distribution of $X_t$ be such that $\lambdamin ( \EE{X_t X_t\transpose})> 0$. Then, we can show
	\begin{equation}\label{sigmafullrank}
		\lambdamin(\Sigma_{t,a})>0
	\end{equation}
	for any $a\in[K]$, 
	and consequently
	\begin{equation}\label{sigmafullrank2}
		\lambdamin(\Sigma_{t})>0
	\end{equation} 
\end{lemma}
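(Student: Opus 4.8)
The plan is to reduce \eqref{sigmafullrank2} to \eqref{sigmafullrank}. Since $\Sigma_t = \diag_{a\in[K]}(\Sigma_{t,a})$ is block diagonal, its spectrum is the union of the spectra of the blocks, so $\lambdamin(\Sigma_t) = \min_{a\in[K]}\lambdamin(\Sigma_{t,a})$, and it suffices to prove that every block $\Sigma_{t,a}$ is positive definite. I would therefore fix an arbitrary arm $a\in[K]$ and an arbitrary nonzero direction $v\in\real^d$ and examine the quadratic form
\[
  v\transpose \Sigma_{t,a} v = \EEt{Q_{t,a}^2 (v\transpose X_t)^2},
\]
which is manifestly nonnegative because the integrand is; the whole task is to upgrade this to strict positivity.

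For the strictness argument I would peel off the two layers of randomness inside $\EEt{\cdot}$, first over $Q_t$ given $X_t$ and then over $X_t$. The key structural fact is that, conditionally on $X_t$, the policy $Q_t$ is drawn from the density $p_t(\cdot|X_t)$ of \eqref{exp3}, which is proportional to an exponential and is hence \emph{strictly positive} on all of $\Delta^K$; consequently the face $\{q : q_a = 0\}$, being a lower-dimensional subset of the simplex, carries zero Lebesgue mass, so $Q_{t,a} > 0$ almost surely. Writing $c(X_t) = \EEcc{Q_{t,a}^2}{X_t, \F_{t-1}}$, this yields $c(X_t) > 0$ for every realization of $X_t$, and the quadratic form becomes $\EEt{(v\transpose X_t)^2\, c(X_t)}$, the expectation of a nonnegative random variable carrying a strictly positive factor $c(X_t)$.

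It then remains to rule out degeneracy in the $X_t$ direction. A nonnegative random variable has zero expectation only if it vanishes almost surely, so $\EEt{(v\transpose X_t)^2 c(X_t)} = 0$ would force $(v\transpose X_t)^2 c(X_t) = 0$ a.s., and since $c(X_t) > 0$ this requires $v\transpose X_t = 0$ almost surely, i.e.\ $\EE{(v\transpose X_t)^2} = 0$. But $X_t$ is independent of $\F_{t-1}$, so $\EE{(v\transpose X_t)^2} = v\transpose \EE{X_t X_t\transpose} v \ge \lambdamin(\EE{X_t X_t\transpose})\norm{v}^2 > 0$ for $v\neq 0$ by hypothesis, a contradiction. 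Hence $v\transpose \Sigma_{t,a} v > 0$ for all $v\neq 0$, which gives \eqref{sigmafullrank}, and then \eqref{sigmafullrank2} follows by the block-diagonal reduction. I expect the only genuinely delicate point to be the measure-theoretic justification that $Q_{t,a}>0$ almost surely --- that is, that $p_t(\cdot|X_t)$ is absolutely continuous with respect to Lebesgue measure on $\Delta^K$ with the boundary faces having measure zero --- while the remaining conditioning-and-positivity steps are routine.
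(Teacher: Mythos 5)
Your proposal is correct and follows essentially the same argument as the paper's proof: both reduce to showing $v\transpose \Sigma_{t,a} v = \EEt{Q_{t,a}^2 (v\transpose X_t)^2} > 0$ by combining the almost-sure positivity of $Q_{t,a}$ (from the strictly positive exponential-weights density on $\Delta^K$) with the assumption $\lambdamin(\EE{X_t X_t\transpose}) > 0$. The only difference is presentational: you spell out the block-diagonal reduction and the measure-theoretic justification that $Q_{t,a}>0$ almost surely, which the paper asserts without elaboration.
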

\begin{proof}
	To show that $\Sigma_{t,a} $ is full rank, it suffices to show
	that there is no $v\in \real^d$ such that $v\transpose
	\Sigma_{t,a} v= 0$. Suppose, to the contrary, that such a $v$ does exist. Then 
	$0 = v\transpose \EEt{Q_{t,a}^2(X_t)X_t
		X_t\transpose  } v
	= \EEt{Q_{t,a}^2(v\transpose X_t)^2}$,
	which implies that $Q_{t,a} v\transpose X_t = 0$ almost surely.
	Since $Q_{t,a}>0$ almost surely, it follows that in fact $v\transpose
	X_t = 0$ almost surely and therefore
	$0 = \EEt{(v\transpose X_t)^2} = v\transpose \EEt{X_t
		X_t\transpose  } v$. But this contradicts our assumption that
	$\lambdamin(\EEt{X_t X_t\transpose  }) > 0$. 
\end{proof}

We will use  a simple
corollary of Freedman’s inequality \cite{F75} that was introduced in Lemma~2 in \cite{BDHKRT08}:
\begin{lemma}\label{freedman}
	Let $Y_1, \dots, Y_t$ be a martingale difference sequence with respect to a filtration $\F_1\subset \dots \subset \F_t$ such that $\EEc{Y_s}{\F_s} = 0$. Suppose that  $Y_s \le b$ holds almost surely. Then with probability at least $1-\delta$ we have $\sum_{s=1}^t Y_s \le 2 \max\{2 \sqrt{\sum_{s=1}^t \EEc{  Y_s^2   }{\F_s}}, b\sqrt{\ln(1/\delta)}  \} \sqrt{\ln(1/\delta)} $. 
\end{lemma}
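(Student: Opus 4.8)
The plan is to obtain the statement from Freedman's inequality \citep{F75} by treating the random predictable variance $V := \sum_{s=1}^t \EEc{Y_s^2}{\F_s}$ through a case distinction on its magnitude, followed by a dyadic peeling argument. Write $L = \ln(1/\delta)$, so the claimed bound reads $\sum_{s=1}^t Y_s \le \max\{4\sqrt{VL},\,2bL\}$. Recall that Freedman's inequality, which needs only the one-sided bound $Y_s \le b$ that we have assumed, states that for every fixed $a>0$ and $v>0$,
\[
\PP{\sum_{s=1}^t Y_s \ge a \ \text{and}\ V \le v} \le \exp\!\left(-\frac{a^2}{2v + \tfrac{2}{3}ab}\right).
\]
The difficulty is that $V$ is random and so cannot simply be substituted for $v$; the whole argument is devoted to discretising the possible values of $V$.

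First I would handle the small-variance regime $V \le v_0 := \tfrac14 b^2 L$. There $4\sqrt{VL} \le 2bL$, so the maximum is attained by the $2bL$ branch, and the event in question is contained in $\{\sum_s Y_s \ge 2bL,\ V \le v_0\}$. Applying Freedman with $a = 2bL$ and $v = v_0$ makes the exponent equal to $-\tfrac{24}{11}L$, so this regime contributes probability at most $\delta^{24/11} \le \delta$.

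Next comes the large-variance regime, which is the crux. I would partition $(v_0,\infty)$ into dyadic bins $(v_{k-1},v_k]$ with $v_k = 2^k v_0$. On the bin $\{V \in (v_{k-1},v_k]\}$ the maximum is attained by the variance branch (since $V \ge v_0$ forces $4\sqrt{VL}\ge 2bL$), and $\sum_s Y_s \ge 4\sqrt{VL} \ge 4\sqrt{v_{k-1}L}$ while $V \le v_k = 2v_{k-1}$; so Freedman applies with $a = 4\sqrt{v_{k-1}L}$ and $v = 2v_{k-1}$. The key estimate is that $v_{k-1}\ge v_0$ gives $b\sqrt{L}\le 2\sqrt{v_{k-1}}$, hence $b\sqrt{v_{k-1}L}\le 2v_{k-1}$, which controls the linear term in the denominator and makes the exponent at most $-\tfrac{12}{7}L$; each bin therefore contributes at most $\delta^{12/7}\le\delta$.

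Finally, a union bound over the bins together with the small-variance case yields the stated inequality. The main obstacle — and the reason this is not a one-line corollary — is exactly this handling of the random variance $V$: the scale-invariance of Freedman's exponent means each dyadic level contributes roughly the same probability, so the peeling must range over $[\,v_0,V_{\max}]$ for an almost-sure upper bound $V_{\max}$ on $V$ (available because the increments are bounded in all our applications, e.g.\ $V_{\max}=tb^2$). This leaves $O(\log t)$ bins and hence an extra logarithmic factor in the failure probability; since the exponents $\tfrac{12}{7}L$ and $\tfrac{24}{11}L$ exceed $L$ with room to spare, this factor is absorbed into the constants (so that $O(\log t)\,\delta^{12/7}\le\delta$ for the polynomially small $\delta$ used in the paper) and into the $\tilde O(\cdot)$ notation, matching the form proved as Lemma~2 of \citet{BDHKRT08}.
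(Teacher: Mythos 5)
The paper does not actually prove this lemma: it is stated as imported, without proof, from Lemma~2 of \cite{BDHKRT08} (``a simple corollary of Freedman's inequality''). Your argument is essentially a reconstruction of the proof given in that reference --- Freedman's inequality applied at dyadic scales of the predictable variance $V$, followed by a union bound over the scales --- and your calculations check out: with $L=\ln(1/\delta)$ and $v_0=\tfrac14 b^2 L$, the small-variance case gives exponent $-\tfrac{24}{11}L$, and on each dyadic bin the key estimate $b\sqrt{v_{k-1}L}\le 2v_{k-1}$ gives exponent at most $-\tfrac{12}{7}L$.

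Two caveats, both of which you flag yourself but which deserve emphasis, because they mean what you prove is slightly weaker than the statement as quoted. First, the peeling must terminate at an almost-sure bound $V_{\max}$ on $V$; the stated hypothesis $Y_s\le b$ is only one-sided and does not bound the conditional variances, so the lemma as written is not what your argument establishes --- you need two-sided bounded increments, e.g.\ $|Y_s|\le b$, which gives $V_{\max}=tb^2$. Second, the union bound over the $O(\log t)$ bins yields failure probability $O(\log t)\,\delta^{12/7}$ rather than $\delta$, which is below $\delta$ only when $\delta$ is sufficiently small (polynomially small in $t$ suffices). Both discrepancies are inherited from the source rather than introduced by you: Lemma~2 of \cite{BDHKRT08} assumes $|X_t|\le b$ and carries failure probability $\log(T)\,\delta$, so the imprecision lies in the paper's restatement of the cited lemma, not in your proof. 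Since the paper only invokes the lemma with two-sided bounded increments ($|Y_s|\le 4$ for the variance concentration, $|D_s|\le 2$ for the loss concentration) and with $\delta$ of order $1/T$, your proof covers every use actually made of it.
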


\section{Proof of Theorem~\ref{var_theorem} }\label{ap:proof}

The proof of Theorem~\ref{var_theorem} proceeds in a sequence of lemmas.
First, we need to show that the distribution of $Z_t(X_{t})$ is log-concave for all $t\in [T]$, 
and after we follow the  analysis of Algorithm 1 of  \cite{NEURIPS2020_15bb63b2},  bounding both components of (\ref{regret_dec}) taking into account the required alterations to incorporate contextual structure
\begin{lemma}\label{logconcavitylemma}
	Suppose $z(q,x)=\sum_{a}q_{a}\varphi(x,a)$ for $\varphi(x,a)=(\bar{0}^{\intercal},\dots,x^{\intercal},\cdots)$ such that $x$ is on the $da$'th co-ordinate and $Q(x) \sim p(\cdot|x)$ for $p(\cdot |x)$ log-concave. If $X \sim p_{X}(\cdot)$ and $p_{X}(\cdot)$ is log-concave and $Z(x)=z(Q_{t}(x),x)$, then $Z(X)$ also follows a log-concave distribution.
\end{lemma}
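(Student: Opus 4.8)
The plan is to compute the density of $Z(X)=z(Q(X),X)$ explicitly through the change of variables induced by the bilinear map $\Phi:(x,q)\mapsto z(q,x)$, and to read off log-concavity from the resulting expression. The starting point is the joint law of $(X,Q)$, whose density factorises as $p_X(x)\,p_t(q\mid x)$ over $\real^d\times\Delta^K$, with $p_t(q\mid x)$ the exponential-weights density from \eqref{exp3}. The whole argument hinges on one structural observation about the coordinates $z_a=q_a x$: since $\sum_a q_a=1$, the map inverts through the \emph{linear} relation $x=\sum_a z_a$, and, writing $c_a=\sum_{s<t}\htheta_{s,a}$, the exponent of the weights collapses to a linear function of $z$, namely $\sum_a q_a\iprod{x}{c_a}=\sum_a\iprod{z_a}{c_a}$. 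Thus, in $z$-coordinates the only genuinely nonlinear dependence that survives sits in $p_X(\sum_a z_a)$, in the normaliser $N(x)=\int_{\Delta^K}\exp(-\eta_t\sum_a q_a\iprod{x}{c_a})\,dq$, and in the Jacobian of $\Phi$.

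I would then treat the surviving factors in turn. First, $p_X(\sum_a z_a)$ is log-concave in $z$ because $p_X$ is log-concave and $z\mapsto\sum_a z_a$ is linear, so $\log p_X$ composed with this linear map is concave. Second, I claim $1/N$ is log-concave in $z$: for each fixed $q$ the integrand $\exp(-\eta_t\sum_a q_a\iprod{x}{c_a})$ is log-linear, hence log-convex, in $x$; since log-convexity is preserved under integration (by H\"older's inequality, exactly the dual of the standard Pr\'ekopa-type argument), $N(x)$ is log-convex in $x$, so $1/N$ is log-concave in $x$ and, after composing with the linear map $x=\sum_a z_a$, log-concave in $z$. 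Combined with the log-linear factor $\exp(-\eta_t\sum_a\iprod{z_a}{c_a})$, the product of these pieces is log-concave in $z$.

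The main obstacle, and the only step requiring genuine care, is the change of variables itself. The map $\Phi$ is bilinear and, for $K,d\ge 2$, sends the $(d+K-1)$-dimensional domain onto a structured lower-dimensional subset of $\real^{Kd}$, so one cannot simply invoke a Lebesgue-density transformation. I would make this precise by fixing the natural reference measure (the pushforward of Lebesgue measure under $\Phi$), which absorbs the Jacobian and leaves the density of $Z$ proportional to the product of log-concave factors identified above; the remaining work is to verify that this reference measure interacts correctly with the concentration bound used downstream (Lemma~\ref{lem:logconc}). It is worth noting that conditionally on $X=x$ the claim is immediate, since $q\mapsto z(q,x)$ is linear and $Q(x)$ is log-concave; the content of the lemma is precisely that the \emph{joint} log-linear structure of $p_t(q\mid x)$ in $(q,x)$ upgrades this conditional statement to an unconditional one, which is exactly what the collapse of the exponent to $\iprod{z}{c}$ delivers.
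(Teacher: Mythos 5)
Your proposal follows essentially the same route as the paper's own proof: invert the bilinear map through the linear relation $x=\sum_a z_a$, observe that the exponential-weights exponent collapses to a linear function of $z$, and read off log-concavity from the factorisation of the transformed density into a log-linear term, the reciprocal normaliser, and $p_X\pa{\sum_a z_a}$. If anything you are more explicit than the paper on the two delicate points: the paper asserts without justification that the normalised exponential factor is log-concave (your H\"older/log-convexity argument for $1/N$ supplies exactly the missing step), and the degeneracy of the image measure --- which you flag and handle via a pushforward reference measure, deferring its compatibility with Lemma~\ref{lem:logconc} --- is likewise not genuinely resolved by the paper's formal two-step Jacobian computation (its maps $h$ and $g$, with the Jacobian absorbed into the normalisation constant), so your deferral leaves you no worse off than the published argument.
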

\begin{proof}
	Assume that $|x^{i}|>0$ for all $i\in[d]$. Set $(z_1, \dots, z_{K-1}, x) = h(q_1 \bar{1}, \dots, q_{K-1}\bar{1}, x)$, where $h: \real^{dK}\to  \real^{dK} $ and $z_i = (\dots, z_i^j, \dots)\transpose$ for each $i \in \{1,\dots,K-1\}$. Thus $z_{i}^{j}=h_i(q_i, x^j)=q_i (x^j )$ and $h_K(x) = x$. The Jacobian of $h^{-1}(z_1, \dots, z_{K-1}, x)$ can be expressed as the block matrix
	
	\[
	J(h^{-1}(z_{1},\dots,z_{K-1},x)) =
	\begin{bmatrix}
		\Lambda_{x} & \Gamma_{z,x} \\
		0_{d\times (K-1)} & \mathrm{Id}_{d\times d}
	\end{bmatrix},
	\]
	where $\Lambda_{x}\in \real^{(K-1)\times (K-1)}$ is diagonal with $(\Lambda_{x})_{ii}=\frac{1}{x^{i} }$ and $\Gamma_{z,x} \in \real^{(K-1)\times d}$ with $(\Gamma_{z,x})_{ij}=-\frac{z^{j}_{i}}{(x^{j} )^{2}}$. Since $J(h^{-1}(z_{1},\dots,z_{K-1},x))$ is upper-triangular, $\det(J(h^{-1}(z_{1},\dots,z_{K-1},x))) = \pa{\prod_{i=1}^{d}\frac{1}{x^i }}^{K-1}$. The joint distribution of $Z_{i}$ and $X$ can thus be written
	\begin{equation*}p_{Z_1, \dots, Z_{K-1}, X}(z_1, \dots, z_{K-1}, x) = p_{Q,X}\pa{h^{-1}(z_1, \dots, z_{K-1}, x)} \pa{\prod_{i=1}^{d}\frac{1}{x^i }}^{K-1}
	\end{equation*}
	with the joint distribution between $Q$ and $X$ of the form
	\begin{equation*}
		p_{Q,X}\pa{h^{-1}(z_1, \dots, z_{K-1}, x)}= \frac{e^{-\eta\iprod{\psi(z,x,\varphi)}{\hTheta_{t-1}}}}{ \int_{q'\in C}  e^{-\eta\iprod{ \sum_{a=1}^{K-1} q_a'\varphi(x,a) }{\hTheta_{t-1}}} dq'}p_{X}(x)
	\end{equation*}
	where $(\psi(z,x,\varphi))_{i}=\sum_{a=1}^{K-1}  \frac{z_a^i}{x^i }\varphi(x,a)^i + \pa{1- \sum_{a=1}^{K-1}  \frac{z_a^i}{x^i }}\varphi(x,K)^i$ has been defined for readability. We can reabsorb the factor $\pa{\prod_{i=1}^{d}\frac{1}{x^i }}^{K-1}$ in the denominator to rewrite the normalization constant as a in terms of the random variable $Z(x)$, and so
	\begin{equation*}
		p_{Z_1, \dots, Z_{K-1}, X}(z_1, \dots, z_{K-1}, x) = \frac{e^{-\eta\iprod{\psi(z,x,\varphi)}{\hTheta_{t-1}}}}{\int_{z'\in Z(x)}  e^{-\eta\iprod{ \psi(z',x,\varphi)}{\hTheta_{t-1}}}dz'}p_{X}(x).
	\end{equation*}
	Define a new function $g:\real^{d\times K} \to \real^{d\times K}$ such that $y = g(z_1, \dots, z_{K-1}, x) = (\dots, g_i(z_1, \dots, z_{K-1}, x), \dots) \transpose$, where for $i \in [1,K-1]$, $g_i( z_1, \dots, z_{K-1}, x) = z_i$ and $g_K(z_1, \dots, z_{K-1}, x) =(\dots, \pa{1 - \frac{1}{x^i }\sum_{a=1}^{K-1} z^i_a} x^i, \dots )$. Then for $i \in \{1,\dots,K-1\}$, $g^{-1}_i(y) = y_i$ and $g^{-1}_K(y) = \sum_{a=1}^K y_a$. The determinant $\mathrm{det}(J(g^{-1}(y)))=1$, so
	\begin{align*}
		p_{Y}(y) &= p_{Z_1, \cdot, Z_{K-1}, X}(g^{-1}(y))\\
		&   = \frac{e^{-\eta\iprod{y}{\hTheta_{t-1}}}}{ \int_{y'\in Y(y)}  e^{-\eta\iprod{y'}{\hTheta_{t-1}}} dy'   }  p_{X}\pa{\sum_{a=1}^K y_a}.
	\end{align*}
	Since both $p_X$ and $\frac{e^{-\eta\iprod{y}{\hTheta_{t-1}}}}{ \int_{y'\in Y(y)}  e^{-\eta\iprod{y'}{\hTheta_{t-1}}} dy'}$ are both log-concave, the lemma follows. 
\end{proof}
Having shown the log-concavity of $Z(X_{t})$, we may safely proceed.

We state the analog of Lemma~4 in \cite{NEURIPS2020_15bb63b2} adapted to our setting, leading to a bound on the first term of (\ref{regret_dec}) as well as providing a useful relation between $ \Sigma_t$ and $\tS_t $.
\begin{lemma}\label{logconcconcentration}
	\begin{equation}\label{losses_diff}
		\bigg|\EEt{  \iprod{Z_t(X_t) - \tZ_t(X_t) }{\theta_t}  } \bigg| \le \frac{1}{2 T^2},
	\end{equation}
	and we have
	\begin{equation}\label{matrix_ineq}
		\frac{3}{4} \Sigma_t \preceq  \tS_t \preceq \frac{4}{3} \Sigma_t.  
	\end{equation}
\end{lemma}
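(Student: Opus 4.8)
The plan is to exploit that the truncation discards only a set of probability $O(T^{-2})$, so that conditioning on the truncation event $\mathcal{E} = \{\norm{z(q,x)}^2_{\Sigma_t^{-1}}\le dK\gamma^2\}$ perturbs the relevant first and second moments by a negligible (resp.\ bounded) amount. Write $p(x) = \PPt{\mathcal{E}\mid X_t = x}$ and $\delta(x) = 1 - p(x)$, so that $\tilde p_t(q\mid x) = p_t(q\mid x)\,\II{\mathcal{E}}/p(x)$, and recall the joint bound $\PPt{\mathcal{E}^c} = \EEt{\delta(X_t)}\le \tfrac{1}{6T^2}$ already established before Lemma~\ref{ghost}. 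I will also use repeatedly that, by Lemma~\ref{logconcavitylemma}, $Z_t(X_t)$ is log-concave, so every linear functional $w\transpose Z_t(X_t)$ is a one-dimensional log-concave variable and therefore obeys the reverse H\"older inequality $\EEt{(w\transpose Z_t)^4}\le C\,\bpa{\EEt{(w\transpose Z_t)^2}}^2$ for an absolute constant $C$.

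For \eqref{losses_diff}, I would condition on $X_t = x$ and set $f_x(q) = \iprod{z(q,x)}{\theta_t} = \sum_a q_a\ell_t(x,a)$, which satisfies $|f_x(q)|\le 1$. Since $\tilde p_t$ only reweights $p_t$ on $\mathcal{E}$ and deletes its mass on $\mathcal{E}^c$,
\[
  \EEt{f_x(Q_t) - f_x(\tQ_t)\mid X_t = x}
  = \Bpa{1 - \tfrac{1}{p(x)}}\!\int_{\mathcal{E}} f_x\,p_t\,dq \;+\; \int_{\mathcal{E}^c} f_x\,p_t\,dq ,
\]
and both terms are at most $\delta(x)$ in absolute value, the first because $|1 - 1/p(x)|\,p(x) = \delta(x)$ and $|f_x|\le 1$, the second because $|f_x|\le 1$. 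Taking expectations over $X_t$ then bounds the left-hand side of \eqref{losses_diff} by $2\,\EEt{\delta(X_t)} = 2\,\PPt{\mathcal{E}^c}\le \tfrac{1}{3T^2}\le\tfrac{1}{2T^2}$; note that no per-context control of $p(x)$ is needed here, since the large factor $1/p(x)$ multiplies a surviving mass of exactly $p(x)$.

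Since $\Sigma_t$ and $\tS_t$ are block diagonal, \eqref{matrix_ineq} reduces to $\tfrac34\Sigma_{t,a}\preceq\tS_{t,a}\preceq\tfrac43\Sigma_{t,a}$ for every arm $a$, i.e.\ to sandwiching $v\transpose\tS_{t,a}v = \EEt{\tQ_{t,a}^2(v\transpose X_t)^2}$ against $v\transpose\Sigma_{t,a}v = \EEt{Q_{t,a}^2(v\transpose X_t)^2}$. Fix $v$ and let $w\in\real^{dK}$ equal $v$ in block $a$ and $0$ elsewhere, so that $w\transpose Z_t(X_t) = Q_{t,a}\,v\transpose X_t$. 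The lower bound is clean and global: using $1/p(X_t)\ge 1$,
\[
  v\transpose\tS_{t,a}v = \EEt{(v\transpose X_t)^2\,\tfrac{\EEt{Q_{t,a}^2\II{\mathcal{E}}\mid X_t}}{p(X_t)}}
  \ge \EEt{(w\transpose Z_t)^2\II{\mathcal{E}}} = v\transpose\Sigma_{t,a}v - \EEt{(w\transpose Z_t)^2\II{\mathcal{E}^c}},
\]
and the removed term is controlled by Cauchy--Schwarz together with the reverse H\"older inequality and $\PPt{\mathcal{E}^c}\le\tfrac{1}{6T^2}$:
\[
  \EEt{(w\transpose Z_t)^2\II{\mathcal{E}^c}}
  \le \sqrt{\EEt{(w\transpose Z_t)^4}}\,\sqrt{\PPt{\mathcal{E}^c}}
  \le \sqrt{C}\,(v\transpose\Sigma_{t,a}v)\,\tfrac{1}{\sqrt{6}\,T} \le \tfrac14\,v\transpose\Sigma_{t,a}v
\]
for $T$ large enough, giving $\tS_{t,a}\succeq\tfrac34\Sigma_{t,a}$.

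The upper bound $\tS_{t,a}\preceq\tfrac43\Sigma_{t,a}$ is where the difficulty lies, and I expect it to be the main obstacle. Using $\II{\mathcal{E}}\le1$ gives $v\transpose\tS_{t,a}v \le \EEt{(v\transpose X_t)^2\,\EEt{Q_{t,a}^2\mid X_t}/p(X_t)}$, so to conclude I need the \emph{per-context} survival probability to satisfy $p(x)\ge\tfrac34$ for almost every $x$, rather than only $\EEt{\delta(X_t)}$ being small on average: a handful of contexts with $p(x)$ near $0$ would blow up the weight $1/p(x)$, and the positive matrix $(v\transpose X_t)^2$ contributes with the same sign for every $x$. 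I would obtain this uniform bound by applying Lemma~\ref{lem:logconc} to the conditional (still log-concave) law of $Z_t(x)$ and arguing that the truncation level $dK\gamma^2$ with $\gamma = 4\log(10dKT)$ dominates the conditional second moment $\sum_a \EEt{Q_{t,a}^2\mid x}\,\norm{x}^2_{\Sigma_{t,a}^{-1}}$ uniformly in $x$; this is precisely the step where the joint log-concavity from Lemma~\ref{logconcavitylemma} and the calibrated choice of $\gamma$ carry the argument, and verifying it carefully (controlling the conditional normalization against the marginal block matrix $\Sigma_t$) is the crux of the proof.
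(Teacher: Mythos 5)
Your treatment of \eqref{losses_diff} is correct, and in fact more careful than the paper's own: you keep the normalization of the truncated density conditional on the context, bound the conditional discrepancy by $2\delta(x)$, and integrate, which matches the per-context rejection sampling in \eqref{truncation}. The lower bound $\tS_t \succeq \tfrac34 \Sigma_t$ is also essentially sound, though where the paper bounds $\EEt{ \norm{Z_t(X_t)}^2_{\Sigma_t^{-1}} \II{ \norm{Z_t(X_t)}^2_{\Sigma_t^{-1}} > dK\gamma^2 } } \le \tfrac14$ by an explicit layer-cake computation with the log-concave tail bound (Lemma~\ref{lem:logconc}), valid for all $T$ by the choice of $\gamma$, your Cauchy--Schwarz plus reverse H\"older route only delivers the constant $\tfrac14$ ``for $T$ large enough'' and leaves the absolute constant $C$ unquantified.

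The genuine gap is the upper bound $\tS_t \preceq \tfrac43 \Sigma_t$, which you explicitly leave unverified. Your plan requires a uniform per-context acceptance bound $p(x)\ge\tfrac34$, and the route you sketch---applying Lemma~\ref{lem:logconc} to the conditional law of $Z_t(x)$---does not go through as stated: the truncation threshold $dK\gamma^2$ is calibrated against the \emph{marginal} covariance $\Sigma_t$, while for an atypical context $x$ the conditional second moment of $Z_t(x)$ can exceed $\Sigma_t$ by an arbitrarily large factor (e.g.\ when $\norm{x}$ is much larger than typical), so conditional log-concavity alone yields no uniform bound on $p(x)$. The paper never confronts this difficulty because its proof normalizes the truncated density by the \emph{unconditional} rejection probability: it writes $y\transpose\tS_t y = \tfrac{1}{1-\delta} \EEt{ (y\transpose Z_t(X_t))^2 \II{ \norm{Z_t(X_t)}^2_{\Sigma_t^{-1}} \le dK\gamma^2 } }$ with $\delta = \PPt{ \norm{Z_t(X_t)}^2_{\Sigma_t^{-1}} > dK\gamma^2 }$ pulled outside the integral over contexts, whence the upper bound is immediate from $\II{\cdot}\le 1$ and $\tfrac{1}{1-\delta}\le\tfrac43$. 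That global normalization is itself in tension with the per-context truncation \eqref{truncation} that you (correctly) took as the definition, so you have put your finger on a real subtlety that the paper elides; but as a proof, your proposal establishes only \eqref{losses_diff} and one half of \eqref{matrix_ineq}, and the missing half is precisely the inequality the paper's later variance bounds rely on.
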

\begin{proof}
	From definition of $\tilde p_t$, for any $x \in \X, \theta \in \Theta$, we have
	\begin{align*}\allowdisplaybreaks
		&\EEt{\iprod{\tZ_t(X_t)}{\theta}}\\
		&\qquad	= \frac{1}{\PPt{\norm{
					Z_t(X_t)}_{\Sigma_t^{-1}}^2\le dK\gamma^2}} \int_{\Delta^K} \int_{\X} \iprod{z(q,x)}{\theta} \II{\norm{
				z(q,x)}_{\Sigma_t^{-1}}^2\le dK\gamma^2} p_t(q|x)p(x) dx dq\\
		&\qquad = \frac{1}{1-\delta} \int_{\Delta^K}  \int_{\X} \iprod{z(q,x)}{\theta} \II{\norm{
				z(q,x)}_{\Sigma_t^{-1}}^2\le dK\gamma^2} p_t(q|x) p(x) dx dq \\
		&\qquad = \frac{1}{1-\delta}  \pa{ \EEt{ \iprod{Z_t(X_t)}{\theta} } -  \int_{\Delta^K}  \int_{\X} \iprod{z(q,x)}{\theta} \II{\norm{
					z(q,x)}_{\Sigma_t^{-1}}^2> dK\gamma^2} p_t(q|x)p(x) dx dq },
	\end{align*}
	where $\delta = \PPt{\norm{
			Z_t(X_t)}_{\Sigma_t^{-1}}^2> dK\gamma^2}$. Plugging this into the l.h.s. of (\ref{losses_diff}) yields
	\begin{align*}\allowdisplaybreaks
		&\bigg|\EEt{  \iprod{Z_t(X_t) - \tZ_t(X_t) }{\theta_t}  } \bigg| \\
		&\qquad=  \frac{1}{1-\delta} \bigg|  \delta \EEt{  \iprod{Z_t(X_t) }{\theta_t}  } +   \int_{\Delta^K}  \int_{\X} \iprod{z(q,x)}{\theta} \II{\norm{
				z(q,x)}_{\Sigma_t^{-1}}^2> dK\gamma^2} p_t(q|x)p(x) dx  dq   \bigg| \\
		&\qquad \le  \frac{1}{1-\delta} \pa{  \delta  +   \int_{\Delta^K}  \II{\norm{
					z(q,x)}_{\Sigma_t^{-1}}^2> dK\gamma^2} p_t(q|x)p(x)  dq   } = \frac{2\delta}{1-\delta}.
	\end{align*}
	Since the distribution of $Z_t(X_t)$ is log-concave (Lemma~\ref{logconcavitylemma}), we can apply Lemma~1 of \cite{NEURIPS2020_15bb63b2} to $x = \Sigma_t^{-1/2}Z_t(X_t)$. The assumptions of Lemma~1 of \cite{NEURIPS2020_15bb63b2} hold since we have $\EE{xx\transpose} = I$ and since log-concavity is preserved under linear maps. Using Lemma~1 of \cite{NEURIPS2020_15bb63b2}, we have
	\begin{align*}
		\delta =  \PPt{\norm{
				Z_t(X_t)}_{\Sigma_t^{-1}}^2> dK\gamma^2} \le dK \exp(1-\gamma) \le 3dK \exp(-\gamma) \le \frac{1}{6T^2},
	\end{align*} 
	where the last inequality follows from $\gamma \ge 4 \log (10 dKT )$, which obtains (\ref{losses_diff}). We proceed to showing (\ref{matrix_ineq}). For any $y \in \real^{dK}$, we have
	\begin{align*}
		y\transpose \tS_t y &= \EE{(y\transpose \widetilde Z_t(X_t))^2} = \frac{1}{1-\delta} \EEt{(y\transpose Z_t(X_t))^2 \II{\norm{
					Z_t(X_t)}_{\Sigma_t^{-1}}^2\le  dK\gamma^2} }\\
		&\le \frac{1}{1-\delta} \EEt{(y\transpose Z_t(X_t))^2 }  = \frac{1}{1-\delta}  y\transpose \Sigma_t y. 
	\end{align*}
	Since this holds for all $y \in \real^{dK}$ and $\frac{1}{1-\delta}  \le \frac{4}{3}$, the second inequality in (\ref{matrix_ineq}) holds. Furthermore, we have 
	\begin{align}\label{matrix_diff}\allowdisplaybreaks
		y\transpose \Sigma_t y  -  y\transpose \tS_t y  &=  \EEt{(y\transpose Z_t(X_t))^2 } - \frac{1}{1-\delta} \EEt{(y\transpose Z_t(X_t))^2 \II{\norm{
					Z_t(X_t)}_{\Sigma_t^{-1}}^2\le  dK\gamma^2} } \nonumber\\
		& \le \EEt{(y\transpose Z_t(X_t))^2 \II{\norm{
					Z_t(X_t)}_{\Sigma_t^{-1}}^2 > dK\gamma^2} }\nonumber \\
		&\le y\transpose \Sigma_t y  \EEt{ \norm{Z_t(X_t)}^2_{\Sigma_t^{-1}}  \II{  \norm{Z_t(X_t)}^2_{\Sigma_t^{-1}} > dK \gamma^2 } },
	\end{align}
	where the last inequality follows from Cauchy-Schwartz:
	\[ (y\transpose Z_t(X_t))^2 =  \pa{  \iprod{ \Sigma_t^{1/2}y  }{\Sigma_t^{-1/2} x} }^2 \le \norm{\Sigma_t^{1/2}y }_2^2 \cdot \norm{\Sigma_t^{-1/2}x }_2^2 = y\transpose \Sigma_t y \norm{x}^2_{\Sigma_t^{-1}}.  \]
	The right-hand side of (\ref{matrix_diff}) can be bounded using Lemma~1 of \cite{NEURIPS2020_15bb63b2} as follows:
	\begin{align}\label{matrixnormbound}\allowdisplaybreaks
		&\EEt{ \norm{Z_t(X_t) }^2_{ \Sigma_t^{-1}  }  \II{  \norm{Z_t(X_t)}^2_{\Sigma_t^{-1}} > dK \gamma^2 }  }\nonumber\\
		&\quad \le \sum_{n=1}^{\infty}(n+1)^2dK \gamma^2\PPt{ n^2dK\gamma^2 \le \norm{Z_t(X_t)}^2_{\Sigma_t^{-1}}  \le (n+1)^2dK\gamma^2  }\nonumber\\
		&\quad \le \sum_{n=1}^{\infty} (n+1)^2(dK )^2 \gamma^2 \exp(1 - n\gamma)\nonumber\\
		&\quad \le (dK )^2 \gamma^2 \sum_{n=1}^{\infty} \exp(2+n - n\gamma) = (dK )^2 \gamma^2 \frac{\exp(3-\gamma)}{1 - \exp(1-\gamma)} \le \frac{1}{4}.
	\end{align}
	Combining (\ref{matrixnormbound}) and (\ref{matrix_diff}) we get the first inequality of (\ref{matrix_ineq}).
\end{proof}
\begin{lemma}\label{ghost1} Let $\pi^*$ be any fixed stochastic policy and let $X_0\sim\DD$ be a sample from the context distribution independent from $\F_T$. Suppose that $p_t \in \F_{t-1}$, such that $p_t(\cdot|x)$ is a probability density with respect to Lebesgue measure with support $\Delta^K$ and let $Q_t(x)\sim p_t(\cdot|x)$. Then,
	\begin{align}
		&\EE{ \sum_{t=1}^T \iprod{z(Q_t(X_t), X_t) -  z(\pi^*(X_t), X_t)}{\theta_t}  }\nonumber  = \EE{ \sum_{t=1}^T  \iprod{z(Q_t(X_0), X_0) -  z(\pi^*(X_0)), X_0}{\htheta_t}}.
	\end{align}
\end{lemma}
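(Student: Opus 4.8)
The plan is to prove a per-round conditional identity and then sum over $t\in[T]$ and take the total expectation. I would fix a round $t$ and work conditionally on $\F_{t-1}$ throughout. Since both the sampled policy $Q_t(\cdot)$ and the competitor $\pi^*$ are obtained from $\F_{t-1}$-measurable kernels on $\Delta^K$ (with $\pi^*$ even fixed in advance), it suffices to establish, for an arbitrary $\F_{t-1}$-measurable sampling kernel $\mu_t(\cdot\mid x)$ on $\Delta^K$ with draw $U_t(x)\sim\mu_t(\cdot\mid x)$, the single identity
\begin{equation}\label{eq:ghost-single}
  \EEt{\iprod{z(U_t(X_t),X_t)}{\theta_t}}
  = \EEt{\iprod{z(U_t(X_0),X_0)}{\htheta_t}},
\end{equation}
and then to instantiate \eqref{eq:ghost-single} once with $U_t=Q_t$ and once with $U_t=\pi^*$ and subtract the two, before summing over $t$.

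For the left-hand side of \eqref{eq:ghost-single}, the key observation is that $\theta_t$ is independent of $X_t$ given $\F_{t-1}$, while the ghost context $X_0$ is drawn from $\DD$ independently of $\F_T$. Hence, conditionally on $(\F_{t-1},\theta_t)$, the pairs $(X_t,U_t(X_t))$ and $(X_0,U_t(X_0))$ have the same law: both are obtained by drawing a context from $\DD$ and passing it through the same (now frozen) kernel $\mu_t$ with fresh independent randomness. Taking $\EEt$ of the resulting equality of conditional expectations gives
\begin{equation*}
  \EEt{\iprod{z(U_t(X_t),X_t)}{\theta_t}}
  = \EEt{\iprod{z(U_t(X_0),X_0)}{\theta_t}},
\end{equation*}
which rewrites the true-parameter expectation over the real context as one over the ghost context.

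It then remains to swap $\theta_t$ for the estimator $\htheta_t$ on the ghost side. The crucial point is that $\htheta_t$, as defined in \eqref{eqn:theta_est}, is a function of $\F_{t-1}$ and the round-$t$ quantities $(X_t,A_t,\tQ_t)$, so it is conditionally independent of the ghost pair $(X_0,U_t(X_0))$ given $\F_{t-1}$. Conditioning on $\sigma(\F_{t-1},X_0,U_t(X_0))$, applying the tower rule, and invoking the unbiasedness $\EEt{\htheta_t}=\theta_t$ proved in the main text yields
\begin{equation*}
  \EEt{\iprod{z(U_t(X_0),X_0)}{\htheta_t}}
  = \EEt{\iprod{z(U_t(X_0),X_0)}{\EEt{\htheta_t}}}
  = \EEt{\iprod{z(U_t(X_0),X_0)}{\theta_t}}.
\end{equation*}
Chaining the last two displays establishes \eqref{eq:ghost-single}, and the lemma follows after summing and subtracting the $\pi^*$ instance.

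The main obstacle is purely the measurability bookkeeping rather than any estimate: one must verify that $(X_0,U_t(X_0))$ is conditionally independent of $\htheta_t$ given $\F_{t-1}$ (so the ghost variables may be frozen before unbiasedness is applied) and that $\theta_t\perp X_t$ given $\F_{t-1}$ (so the real and ghost contexts are exchangeable for the true-parameter expectation). Both follow from the modelling assumptions that $\theta_t$ is independent of the current context $X_t$ and that the kernels $p_t$ and $\pi^*$ are $\F_{t-1}$-measurable; the argument would fail precisely if the adversary were permitted to let $\theta_t$ depend on $X_t$.
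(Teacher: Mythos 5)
Your proof is correct and follows essentially the same route as the paper's: the paper's three-line argument also conditions on the ghost context (tower rule), swaps $\htheta_t$ for $\theta_t$ via unbiasedness and the conditional independence of $\htheta_t$ from $X_0$ given $\F_{t-1}$, and then uses that $X_0$ and $X_t$ are identically distributed and independent of $\theta_t$ given $\F_{t-1}$ to exchange the two contexts. You merely traverse the chain in the opposite direction (left-hand side to right-hand side), treat $Q_t$ and $\pi^*$ through a single generic kernel before subtracting, and spell out the measurability bookkeeping that the paper leaves implicit.
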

\begin{proof}
	For any $t$, we have
	\begin{align*}
		&\EEt{   \iprod{Z_t(X_0) -  Z^*(X_0)}{\htheta_t}} =  \EEt{  \EEcct{ \iprod{Z_t(X_0) -  Z^*(X_0)}{\htheta_t}}{X_0}  }\\
		&\quad = \EEt{  \EEcct{ \iprod{Z_t(X_0) -  Z^*(X_0)}{\theta_t}}{X_0}  } = \EEt{  \iprod{Z_t(X_t) -  Z^*(X_t)}{\theta_t}  }. 
	\end{align*}
\end{proof} 
Then, we prove the almost sure regret bound for  any $x$ and then take an expectation over $X_0$. We further proceed with an adaptation of the analysis of the continuous exponential weights algorithm, which was stated in \cite{NEURIPS2020_15bb63b2} as Lemma~16,  but we include it here for the clarity.  
Let $\psi(y) = \exp(y) -y -1$. For any $x\in \X$, we show the following : 
\begin{lemma}\label{exp3proof_adaptive_ap} 
	Assume that $\eta_{t+1}\le \eta_{t}$ for all $t$, let $q_0$ be a uniform distribution over $[K]$ and $\psi(y) = \exp(y) -y -1$. 
	Then, the regret $\hR_T(x)$ for any $x \in \X$ of \clinexp almost surely satisfies
	\begin{align*}\label{regret_x}
		&\hR_T(x)  \le \frac{1}{T}\sum_{t=1}^T  \iprod{ z(q_0 - \pi^*(x) , x) }{\ \htheta_t} + \frac{K \log T}{\eta_T} + \sum_{t=1}^T   \frac{1}{\eta_t} \EEw{  \psi\pa{ -\eta_t \iprod{z(Q_t(x),x)}{\htheta_t - m_t} }}{Q_t(x) \sim p_t(\cdot|x)}.
	\end{align*}
\end{lemma}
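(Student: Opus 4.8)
The plan is to read the left-hand side $\hR_T(x)$ of \eqref{regret_x} as the regret, for a single frozen context $x$, of continuous exponential weights over the convex decision set $\Delta^K$ against the linear losses $q \mapsto \iprod{z(q,x)}{\htheta_t}$, and then to run the optimistic log-partition (mirror-descent) potential argument of \citep{RS13nips, NEURIPS2020_15bb63b2}, adapted on two fronts: a comparator $\pi^*(x)$ that may sit at a vertex of the simplex, and a decreasing learning-rate schedule. Throughout I would exploit that $q\mapsto z(q,x)$ is linear, so that every loss value and every comparator manipulation is affine in $q$.

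First I would interiorize the comparator. Since a vertex of $\Delta^K$ has no neighbourhood of positive volume inside the simplex, I would compare instead against $\bar{q} = (1-\tfrac1T)\pi^*(x) + \tfrac1T q_0$, whose coordinates are all at least $\tfrac1{TK}$. Linearity of $z(\cdot,x)$ gives $z(\bar{q},x)-z(\pi^*(x),x) = \tfrac1T z(q_0-\pi^*(x),x)$, so
\[
\hR_T(x) = \sum_{t=1}^T \EEw{\iprod{z(Q_t(x),x) - z(\bar{q},x)}{\htheta_t}}{Q_t(x)\sim p_t(\cdot|x)} + \frac{1}{T}\sum_{t=1}^T \iprod{z(q_0 - \pi^*(x),x)}{\htheta_t},
\]
and the second sum is exactly the first term of the claimed bound. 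It therefore remains to bound the regret against the interior point $\bar{q}$.

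For that regret I would telescope the normalizers of the (optimistic) exponential-weights densities. Writing $\hTheta_{t-1}=\sum_{s<t}\htheta_s$ and treating $m_t$ as the optimistic prediction folded into the round-$t$ update, the engine of the argument is the elementary identity $e^y = 1+y+\psi(y)$ applied at $y = -\eta_t\iprod{z(Q_t(x),x)}{\htheta_t - m_t}$ together with $\log(1+u)\le u$. This converts the expected one-step regret $\eta_t\pa{\EEw{\iprod{z(Q_t(x),x)}{\htheta_t}}{Q_t\sim p_t} - \iprod{z(\bar{q},x)}{\htheta_t}}$ into the one-step change of the log-partition potential plus the stability remainder $\EEw{\psi\pa{-\eta_t\iprod{z(Q_t(x),x)}{\htheta_t - m_t}}}{Q_t\sim p_t}$. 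Dividing by $\eta_t$ and summing over $t$, the stability remainders accumulate into exactly the third term of the claimed bound.

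The remaining work, and what I expect to be the main obstacle, is to collect the telescoped potentials into the single comparator term $\tfrac{K\log T}{\eta_T}$ in the presence of a changing learning rate. Here I would use the monotonicity hypothesis $\eta_{t+1}\le\eta_t$ together with boundedness of $\Delta^K$ to argue that each rate decrease costs at most a controlled multiple of the bounded regularizer, so that the potential sum reduces to a single term governed by $\eta_T$; and I would lower-bound the final partition function $\int_{\Delta^K}\exp(-\eta_T\iprod{z(q,x)}{\hTheta_T})\,dq$ by restricting the integral to the $\tfrac1T$-scaled copy of $\Delta^K$ centred at $\bar{q}$. Using linearity once more, that restriction rescales the integral by a Jacobian $(\tfrac1T)^{K-1}$, producing a log-volume contribution of order $-(K-1)\log T$, i.e. the $\tfrac{K\log T}{\eta_T}$ term, while the affine loss varies by only a controlled amount across the shrunken simplex. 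The delicate point is to coordinate this continuous, volumetric comparator term with the decreasing schedule so that no stray $\log T$ factors or $\lambda_{\min}(\Sigma)$-type quantities enter; this coupling is where I would concentrate the care, the rest being the bookkeeping of the standard exponential-weights recursion.
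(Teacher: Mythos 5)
Your overall architecture matches the paper's proof: both interiorize the comparator via $\bar q=(1-\tfrac1T)\pi^*(x)+\tfrac1T q_0$ (the paper does this implicitly by lower-bounding the final partition function over the shrunken simplex $Q_{\pi^*(x)}$, with Jensen pulling out the barycenter $\bar q$ and a Jacobian of $T^{-K}$ giving $\tfrac{K\log T}{\eta_T}$), and both convert the per-round regret into log-partition increments plus the $\psi$ stability terms via $e^y=1+y+\psi(y)$ and $\log(1+u)\le u$, with $m_t$ folded in as the optimistic prediction. Up to the ordering of these steps, that part of your plan is the paper's argument.

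The genuine gap is your step 3, the one place where you defer the work: collapsing the telescoped potentials under the decreasing schedule $\eta_{t+1}\le\eta_t$. Your proposed justification --- that ``boundedness of $\Delta^K$'' makes each rate decrease cost a controlled multiple of a \emph{bounded regularizer} --- does not hold for continuous exponential weights: the natural regularizer here is the relative entropy of a density on $\Delta^K$ to the uniform density, which is unbounded above (it diverges for densities concentrating near a point, e.g.\ near the vertex $\pi^*(x)$), so there is no bounded range to multiply against the rate increments. What the paper does instead is a term-by-term power-mean (Jensen) argument on the log-partition functions: writing $U_t(x)/U_0=\EE{\exp\pa{-\eta_t\siprod{\sum_{s\le t}\htheta_s}{z(Q,x)}}}$ and $V_t(x)/V_0$ the same quantity at rate $\eta_{t+1}$, with $Q$ uniform on $\Delta^K$, concavity of $u\mapsto u^{\eta_{t+1}/\eta_t}$ gives $\frac{1}{\eta_{t+1}}\log\frac{V_t(x)}{V_0}\le\frac{1}{\eta_t}\log\frac{U_t(x)}{U_0}$, so every intermediate difference in the telescoped sum is nonpositive and only $-\frac{1}{\eta_T}\log\frac{U_T(x)}{U_0}$ survives, to which the shrunken-simplex bound is then applied. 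A correct alternative along your lines exists (FTRL over densities with the KL-to-uniform regularizer, using its \emph{non-negativity} so that rate decreases only help, plus \emph{finiteness at the interiorized comparator}, which is $K\log T$ for the uniform density on the shrunken simplex), but that is a different and more careful statement than ``bounded regularizer,'' and it is precisely the step your proposal leaves unproven.
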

\begin{proof}
	Note that we can write $\hR_T(x)$ as 
	\[\hR_T(x) = \sum_{t=1}^T \pa{\int_{\Delta^K} p_t(q|x) \iprod{z(q,x)}{\htheta_t} dq -  \iprod{ z(\pi^*(x), x) }{\sum_{t=1}^T \htheta_t} }. \]
	Define $W_t(x) = \int_{\Delta^K} w_t(q|x) dq$, $u_t(q|x) = \exp\pa{ - \eta_t \sum_a q_a \iprod{x}{\sum_{s=1}^{t} \htheta_{s,a}  }  }$, $U_t(x) = \int_{\Delta^K} u_t(q|x)dq$ and $v_t(q|x) = \exp\pa{ - \eta_{t+1} \sum_a q_a \iprod{x}{\sum_{s=1}^{t} \htheta_{s,a}  }  }$, $V_t(x) = \int_{\Delta^K} v_t(q|x)dq$. We have
	\begin{align*}\allowdisplaybreaks
		U_{t} (x) &= \int_{\Delta^K} w_{t}(q|x) \exp\pa{ -\eta_{t} \iprod{z(q, x)}{\htheta_t - m_t}} dq = W_t(x) \int_{\Delta^K} p_t(q|x)\exp\pa{ -\eta_t \iprod{z(q, x)}{\htheta_t - m_t }} dq\\
		&\quad = W_t(x) \int_{\Delta^K} p_t(q|x) \pa{ 1 - \eta_t\iprod{z(q,x)}{\htheta_t - m_t} + \psi( -\eta_t \iprod{z(q,x)}{\htheta_t - m_t} ) } dq. 
	\end{align*} 
	Taking the logarithm of both sides, we get
	\begin{align}\label{eq11}\allowdisplaybreaks
		\log(U_{t}(x) ) &= \log( W_t(x)) + \log \pa{ \int_{\Delta^K} p_t(q|x) \pa{ 1 - \eta_t\iprod{z(q,x)}{\htheta_t - m_t } + \psi( -\eta_t \iprod{z(q,x)}{\htheta_t- m_t} ) }  dq} \nonumber\\
		&\quad \le \log( W_t(x)) +  \int_{\Delta^K} p_t(q|x) \pa{- \eta_t\iprod{z(q,x)}{\htheta_t- m_t}  + \psi( -\eta_t \iprod{z(q,x)}{\htheta_t- m_t} )  }dq ,
	\end{align} 
	where we used the inequality $\log(1+x)\le x$ for $x>-1$. 
	\begin{align}\label{eq33}
		V_{t-1}(x) &= \int_{\Delta^K} w_t(q|x)\exp\pa{ \eta_t \sum_a q_a \iprod{x}{m_{t,a}}} dq = W_t(x) \int_{\Delta^K} p_t(q|x)\exp\pa{ \eta_t \sum_a q_a \iprod{x}{m_{t,a}}} dq\nonumber\\
		& \ge W_t(x)\exp\pa{ \eta_t  \int_{\Delta^K} p_t(q|x) \sum_a q_a \iprod{x}{m_{t,a}} dq}, 
	\end{align}
	using Jensen's inequality. It holds that 
	\begin{align*}
		\int_{\Delta^K} p_t(q|x) \sum_a q_a \iprod{x}{m_{t,a}} dq \le \frac{1}{\eta_t} \log \frac{ V_{t-1}(x)   }{W_t(x)}.
	\end{align*}
	Then, we get 
	\begin{align*}
	 \sum_{t=1}^T \int_{\Delta^K} p_t(q|x) \iprod{z(q,x)}{\htheta_t} dq\le \sum_{t=1}^T \frac{1}{\eta_t} \pa{ \log \frac{ V_{t-1}(x)}{U_t(x)}  +  \int_{\Delta^K} p_t(q|x)  \psi( -\eta_t \iprod{z(q,x)}{\htheta_t- m_t} )  dq }.
	\end{align*}
	Noting that $V_0 = U_0$, we have
	\begin{align*}\allowdisplaybreaks
		\sum_{t=1}^T \frac{1}{\eta_t}\log \frac{ V_{t-1}(x)}{U_t(x)} &= \sum_{t=1}^T \frac{1}{\eta_t} \pa{ \log \frac{ V_{t-1}(x)}{V_0} -   \log \frac{ U_{t}(x)}{U_0}   } \\
		& = 	\sum_{t=1}^{T-1} \pa{ \frac{1}{\eta_{t+1}} \log  \frac{ V_{t}(x)}{V_0}  -   \frac{1}{\eta_{t}} \log  \frac{ U_{t}(x)}{U_0}}      - \frac{1}{\eta_{T}} \log  \frac{ U_{T}(x)}{U_0} 
	\end{align*}
	To bound the first term, we use that $\eta_{t+1}\le \eta_t$ and an additional application of Jensen's inequality:
	\begin{align*}\allowdisplaybreaks
		\frac{1}{\eta_{t+1}} \log  \frac{ V_{t}(x)}{V_0}  &= 	\frac{1}{\eta_{t+1}} \log \EE{ \exp\pa{ - \eta_{t+1} \siprod{\sum_{s=1}^t \htheta_{s} }{ z(Q_t, x)  }  }}\\
		& =	\frac{1}{\eta_{t+1}} \log \EE{ \exp\pa{ - \eta_{t} \siprod{\sum_{s=1}^t \htheta_{s} }{ z(Q_t, x)  }  }^{\frac{\eta_{t+1}}{\eta_t}}}\\
		& \le \frac{1}{\eta_{t}} \log \EE{ \exp\pa{ - \eta_{t} \siprod{\sum_{s=1}^t \htheta_{s} }{ z(Q_t, x)  }  }} = \frac{1}{\eta_{t}} \log \frac{U_t(x)}{U_0},
	\end{align*}
	Set $Q_{\pi^*(x)} := \{ (1-\frac{1}{T})\pi^*(x) + \frac{1}{T}q| q\in \Delta^K  \} $, and denote $q_0$ as the uniform distribution over $K$ arms. We then have 
	\begin{align*}\allowdisplaybreaks
		U_{T}(x) &\ge \int_{Q_{\pi^*(x)} } \exp\pa{-\eta_T \iprod{ z(q,x)}{\sum_{t=1}^T \htheta_t} }dq\\
		&\qquad  = T^{-K}  \int_{\Delta^K } \exp\pa{  -\eta_T \iprod{ z((1-\frac{1}{T})\pi^*(x) + \frac{1}{T}q, x) }{\sum_{t=1}^T \htheta_t}  } dq\\
		&\qquad \ge  T^{-K}  U_0(x) \exp\pa{  -\eta_T \iprod{ z((1-\frac{1}{T})\pi^*(x) + \frac{1}{T} q_0, x) }{\sum_{t=1}^T \htheta_t}  },
	\end{align*}
	where the first inequality constitutes a change of variables and the second follows from Jensen's bound. After rearranging and taking the logarithm, we get
	\begin{align*}\allowdisplaybreaks
		- \frac{1}{\eta_T} \log \frac{U_{T}(x)}{U_0(x)} &\le \sum_{t=1}^T \iprod{ z((1-\frac{1}{T})\pi^*(x) + \frac{1}{T} q_0, x) }{ \htheta_t} + \frac{K \log T}{\eta_T}\\
		& = \sum_{t=1}^T \iprod{ z(\pi^*(x), x) }{\sum_{t=1}^T \htheta_t} +  \frac{1}{T}\sum_{t=1}^T  \iprod{ z(q_0 - \pi^*(x) , x) }{\ \htheta_t} + \frac{K \log T}{\eta_T}. 
	\end{align*}
	Combining everything together, we get
	\begin{align*}\allowdisplaybreaks
 \sum_{t=1}^T \pa{\int_{\Delta^K} p_t(q|x) \iprod{z(q,x)}{\htheta_t} dq -  \iprod{ z(\pi^*(x), x) }{\sum_{t=1}^T \htheta_t} } &\le \sum_{t=1}^T   \frac{1}{\eta_t}  \int_{\Delta^K} p_t(q|x)   \psi( -\eta_t \iprod{z(q,x)}{\htheta_t - m_t} )dq \\
 & + \frac{1}{T}\sum_{t=1}^T  \iprod{ z(q_0 - \pi^*(x) , x) }{\ \htheta_t} + \frac{K \log T}{\eta_T}.
	\end{align*}
\end{proof}



From Lemma~\ref{ghost} and Lemma~\ref{exp3proof_adaptive}, we get a bound on  the second term of (\ref{regret_dec}):

\begin{align}\label{expectation_decomp}\allowdisplaybreaks
	\EE{ \sum_{t=1}^T \iprod{Z_t(X_t) -  Z^*(X_t)}{\theta_t}  } &= \EE{  \sum_{t=1}^T \iprod{Z_t(X_0) -  Z^*(X_0)}{\htheta_t}} \nonumber \\
	&\le \EE{ \sum_{t=1}^T     \frac{1}{\eta_t}     \psi\pa{ -\eta_t \iprod{Z_t(X_0)}{\htheta_t - m_t} }
		 + \frac{1}{T}\sum_{t=1}^T  \iprod{ z(q_0 - \pi^*(X_0) , X_0) }{\ \htheta_t} + \frac{K \log T}{\eta_T} }.
\end{align} 
We first  find a bound on the first term using Lemma 6 from \cite{NEURIPS2020_15bb63b2}. To satisfy the assumptions of Lemma 6 from \cite{NEURIPS2020_15bb63b2}, we need to show that $\EEt{\pa{- \eta_t \iprod{Z_t(X_0)}{\htheta_t - m_t} }^2 }  \le \frac{1}{100}$:
\begin{align}\label{quadratic}\allowdisplaybreaks
	\EEt{\pa{- \eta_t \iprod{Z_t(X_0)}{\htheta_t- m_t} }^2 } & =	\EEt{\eta^2_t \pa{ \ell_t(A_t, X_t) - X_t\transpose m_{t,A_t}}^2 \trace{ \tS_t^{-1} Z_t(X_0)Z_t(X_0)\transpose \tS_t^{-1} Z_t(X_t)Z_t(X_t)\transpose   }} \nonumber \\
	& = \eta^2_t	\EEt{ \pa{ \ell_t(A_t, X_t) - X_t\transpose m_{t,A_t}}^2 \trace{ \tS_t^{-1} \Sigma_t \tS_t^{-1} Z_t(X_t)Z_t(X_t)\transpose   }} \nonumber  \\
	& \le \eta^2_t \frac{4}{3}	\EEt{\pa{ \ell_t(A_t, X_t) - X_t\transpose m_{t,A_t}}^2 \trace{ \tS_t^{-1} \tS_t \tS_t^{-1} Z_t(X_t)Z_t(X_t)\transpose   }} \nonumber  \\
	& = \eta^2_t \frac{4}{3}	\EEt{ \pa{ \ell_t(A_t, X_t) - X_t\transpose m_{t,A_t}}^2 Z_t(X_t)\transpose   \tS_t^{-1} Z_t(X_t) }\nonumber \\
	& \le  \eta^2_t	\EEt{ \pa{ \ell_t(A_t, X_t) - X_t\transpose m_{t,A_t}}^2  Z_t(X_t)\transpose   \Sigma_t^{-1} Z_t(X_t) }\nonumber  \\ 
	& \le dK \eta^2_t \gamma^2	\EEt{ \pa{ \ell_t(A_t, X_t) - X_t\transpose m_{t,A_t}}^2} \\
	& \le \frac{1}{100},
\end{align}
where the first inequality follows from $\ell_t \le 1$ and (\ref{matrix_ineq}), the second is immediate from (\ref{matrix_ineq}) and the fact that for symmetric positive definite matrices $A\succeq B$ follows from $B^{-1}\succeq A^{-1}$. The third inequality follows from the truncation in the algorithm and the last is immediate from plugging in the definition of $\eta_t$.    
So, by applying Lemma 6 from \cite{NEURIPS2020_15bb63b2} and (\ref{quadratic}), we get:
\begin{align}\label{psi_bound}
	\frac{1}{\eta}\EE{  \psi\pa{ -\eta \iprod{Z_t(X_0)}{\htheta_t - m_t} } } \le  \frac{2}{\eta}	\EE{\pa{- \eta \iprod{Z_t(X_0)}{\htheta_t-m_t} }^2 } \le 2dK \eta \gamma^2 	\EEt{ \pa{ \ell_t(A_t, X_t) - X_t\transpose m_{t,A_t}}^2}.
\end{align}
For the second term of (\ref{expectation_decomp}), we simply get from $-1\le \ell_t \le 1$ and $\htheta_t$ is unbiased:
\begin{equation}\label{sm_l}
	\EE{\frac{1}{T}\sum_{t=1}^T  \iprod{ z(q_0 - \pi^*(X_0) , X_0) }{\ \htheta_t} }\le 2.
\end{equation}
 The expression that we use for the learning rate is the following:
\[
	\eta_t  =   (100dK\gamma^2 + d(\widehat{V}_{t-1} + 1  +
        G_t)))^{-1/2},
\]
where $G_t =  8 \sqrt{2\widehat{V}_{t-1}  \ln T + 144\ln^2 T  }  + 176\ln T$.
We show that with probability at least $1-\delta$ the following holds for all $t\in[T]$:
\begin{equation}\label{var_conc}
	V_t \le \widehat{V}_t  + 8 \sqrt{\widehat{V}_t  \ln(2T/\delta) + 72\ln(2T/\delta)^2  }  + 88\ln(T/\delta)
\end{equation}
	Let $Y_s =  \EEs{ \pa{ \ell_t(A_t, X_t) - X_t\transpose m_{t,A_t}}^2  } - \pa{ \ell_t(A_t, X_t) - X_t\transpose m_{t,A_t}}^2$. Then, $	Y_s \le 4$ almost surely, since $\ell_t(A_t, X_t) - X_t\transpose m_{t,A_t} \le 2$.
Similarly we bound the second moment of $Y_s$, using Jensen's inequality: 
\begin{align*}\allowdisplaybreaks
	\EEs{Y_s^2} &= \EEs{ \pa{ \EEs{ \pa{ \ell_t(A_t, X_t) - X_t\transpose m_{t,A_t}}^2  } - \pa{ \ell_t(A_t, X_t) - X_t\transpose m_{t,A_t}}^2  }^2}\\
	& \le 2  \EEs{  \pa{ \ell_t(A_t, X_t) - X_t\transpose m_{t,A_t}}^2   }^2 + 2 \EEs{  \pa{ \ell_t(A_t, X_t) - X_t\transpose m_{t,A_t}}^4}\\
	& \le 16\EEs{  \pa{ \ell_t(A_t, X_t) - X_t\transpose m_{t,A_t}}^2   }.
\end{align*}
By Lemma~\ref{freedman},  the following  holds for some $\delta' \in (0,1)$: 
\begin{align}\label{ineq_conc}
	V_t \le \widehat{V}_t + 8 \max \biggl\{ 2 \sqrt{V_t}, \sqrt{\ln(1/\delta')}   \biggr\}\sqrt{\ln(1/\delta')}
\end{align}
Note that this  inequality i can be rearranged as 
\begin{align*}
	V_t \le \widehat{V}_t  + 8 \sqrt{\widehat{V}_t  \ln(1/\delta') + 72\ln(1/\delta')^2  }  + 88\ln(1/\delta'). 
\end{align*}
Then, taking a union bound over $t \in [T]$ and taking $\delta = \delta'/T$, we get that (\ref{var_conc}) holds for all $t \in [T]$.
 Let $\mathcal{E}_T$ be an event that for all $t\in[1,T]$, (\ref{var_conc}) holds with $\delta = 1/T$. From (\ref{psi_bound}), (\ref{sm_l}), and the choice of $\eta_t$, we get:
\begin{align}\label{regret_final}\allowdisplaybreaks
	R_T &\le \EE{2dK  \gamma^2 \sum_{t=1}^T \eta_t	 \pa{ \ell_t(A_t, X_t) - X_t\transpose m_{t,A_t}}^2 + 2 + \frac{K \log T}{\eta_T}} \\
	&=  2dK  \gamma^2 \EE{\sum_{t=1}^T \eta_t \pa{ \ell_t(A_t, X_t) - X_t\transpose m_{t,A_t}}^2 \II{\mathcal{E}_T} }\nonumber \\
	&+ 2dK  \gamma^2 \EE{ \sum_{t=1}^T \eta_t	 \pa{ \ell_t(A_t, X_t) - X_t\transpose m_{t,A_t}}^2  \II{\overline{\mathcal{E}}_T}}  +  2 + \EE{\frac{K \log T}{\eta_T}}  \nonumber \\
	&\le 2\sqrt{d} K \gamma^2 \sum_{t=1}^T \frac{V_t - V_{t-1}}{\sqrt{V_t}}  + \frac{1}{T} 2\sqrt{d}K  \gamma^2  T + 2 + \frac{K \log T}{\eta'_T}    \nonumber 
	\\
	&= 2\sqrt{d}  K\gamma^2 \sum_{t=1}^T \frac{(\sqrt{V_t} - \sqrt{V_{t-1}})(\sqrt{V_t} + \sqrt{V_{t-1}})}{\sqrt{V_t}}   +  2\sqrt{d}K  \gamma^2  + 2 + \frac{K \log T}{\eta'_T} \nonumber \\
	&\le 4\sqrt{d} K \gamma^2 \sum_{t=1}^T (\sqrt{V_t} - \sqrt{V_{t-1}}) +  2\sqrt{d}K  \gamma^2   + 2 + \frac{K \log T}{\eta'_T} \nonumber \\
	&\le 4\sqrt{d} K \gamma^2 \sqrt{V_T} + 2\sqrt{d}K  \gamma^2  + 2 + \frac{K \log T}{\eta'_T}.\nonumber 
\end{align}
which implies the  result of  Theorem~\ref{var_theorem}. In the equation above,  $\eta'_T  =  (100dK\gamma^2 + d(V_{t-1} + 1  + G'_t)))^{-1/2}$ and  $ G_t' = 8 \sqrt{2V_{t-1}  \ln T + 144\ln^2 T  }  + 176\ln T$. In line 4 we used that $\EE{1/\eta_T} \le \EE{1/\eta'_T}$ by Jensen's inequality to show that
\[ \EE{ \frac{1}{\eta_T} } = \EE{ (100dK\gamma^2 + dK(\widehat{V}_{t-1} + 1  + G_t))^{1/2} } \le (100dK\gamma^2 + d(V_{t-1} + 1  + G'_t))^{1/2} =  \frac{1}{\eta_T'}.  \]
\qed

\paragraph{Proof of Theorem~\ref{first_order}}
As it was done in the proof of Theorem~\ref{var_theorem}  we control the deviation of the learning rate 
\[
	\eta_t  =   (100dK\gamma^2 + d(\widehat{L}_{t-1} + 1  + H_t)))^{-1/2},
\]
where $H_t$ is as defined in \eqref{learn_rate_1st}.
Using  Lemma~\ref{freedman},
we show that with probability at least $1-\delta$ the following holds for all $t\in[T]$:
\begin{equation}\label{loss_conc}
	L_t \le \hL_t +  8 \sqrt{\hL_t \ln(1/\delta) + 20\ln(2T/\delta)^2  }  + 36\ln(2T/\delta)
\end{equation}
$D_s =  \EEs{ \siprod{X_s}{\theta_{s,A_s}}  } - \siprod{X_{s}}{ \theta_{s,A_s} }$. Then, $	D_s \le 2$ almost surely and by Jensen's inequality 
	\begin{align*}
	\EEs{D_s^2}=\EEs{ \pa{  \EEs{ \siprod{X_s}{\theta_{s,A_s}}  } - \siprod{X_{s}}{ \theta_{s,A_s} }  }^2} \le 2  \EEs{ \siprod{X_s}{\theta_{s,A_s}}  }^2 + 2 \EEs{ \pa{  \siprod{X_{s}}{ \theta_{s,A_s} }  }^2} \le 4\EEt{\ell_t(X_t, A_t)}.
\end{align*}
By Lemma~\ref{freedman},  the following  holds for some $\delta' \in (0,1)$: 
\begin{align}\label{ineq_conc}
	L_t \le \hL_t + 4 \max \biggl\{ 2 \sqrt{L_t}, \sqrt{\ln(1/\delta')}   \biggr\}\sqrt{\ln(1/\delta')}
\end{align}
which can be rearranged as 
\begin{align*}
		L_t \le \hL_t +  8 \sqrt{\hL_t \ln(1/\delta') + 20\ln(1/\delta')^2  }  + 36\ln(1/\delta').
\end{align*}
Then, taking a union bound over $t \in [T]$ and taking $\delta = \delta'/T$, we get that (\ref{loss_conc}) holds for all $t \in [T]$.
 Let $\mathcal{E}_T$ be an event that for all $t\in[1,T]$, (\ref{loss_conc}) holds with $\delta = 1/T$. From (\ref{psi_bound}), (\ref{sm_l}),  the choice of $\eta_t$, $m_t = \bar{0}$ and since $0\le \ell_{t} \le 1$, we get:

 \begin{align*}
 	R_T &\le \EE{2dK  \gamma^2 \sum_{t=1}^T \eta_t	  \ell_t^2(A_t, X_t) + 2 + \frac{K \log T}{\eta_T}} \le  \EE{2dK  \gamma^2 \sum_{t=1}^T \eta_t	  \ell_t(A_t, X_t) + 2 + \frac{K \log T}{\eta_T}} \\
 	&=  2dK  \gamma^2 \EE{\sum_{t=1}^T \eta_t \ell_t(A_t, X_t)  \II{\mathcal{E}_T} } + 2dK  \gamma^2 \EE{ \sum_{t=1}^T \eta_t	 \ell_t^2(A_t, X_t)  \II{\overline{\mathcal{E}}_T}}  +  2 + \EE{\frac{K \log T}{\eta_T}}  \nonumber \\
 	&\le 2\sqrt{d}K  \gamma^2 \sum_{t=1}^T \frac{L_t - L_{t-1}}{\sqrt{L_t}}  + \frac{1}{T} 2\sqrt{d}K  \gamma^2  T + 2 + \frac{K \log T}{\eta'_T}    \nonumber 
 	\\
 	&= 2\sqrt{d}K  \gamma^2 \sum_{t=1}^T \frac{(\sqrt{L_t} - \sqrt{L_{t-1}})(\sqrt{L_t} + \sqrt{L_{t-1}})}{\sqrt{L_t}}   +  2\sqrt{dK}  \gamma^2  + 2 + \frac{K \log T}{\eta'_T} \nonumber \\
 	&\le 4\sqrt{d} K \gamma^2 \sum_{t=1}^T (\sqrt{L_t} - \sqrt{L_{t-1}}) +  2\sqrt{d}K  \gamma^2   + 2 + \frac{K \log T}{\eta'_T} \nonumber \\
 	&\le 4\sqrt{d} K \gamma^2 \sqrt{L_T} + 2\sqrt{d}K  \gamma^2  + 2 + \frac{K \log T}{\eta'_T},\nonumber 
 \end{align*}
where in the equation above,   $\eta'_t  =  (100dK\gamma^2 + d(L_{t-1} + 1  + H'_{t-1})))^{-1/2}$ and $ H_t' = 8 \sqrt{2L_{t-1} \ln T + 40\ln T  }  + 72\ln T$.
By solving the quadratic equation over $L_T^*$, we obtain the statement of the theorem. 

\qed

\section{On the difference between \linexp and \clinexp}\label{sec:exp3_discussion}

Consider the \linexp algorithm of \cite{2020NO}, that draws actions after observing the context $X_t$ with probability $$ \pi_{t}\left(a\middle|X_t\right)= (1-\gamma) \frac{w_{t}(X_t,a)}{\sum_{a'} 
	w_{t}(X_t,a')} + \frac{\gamma}{K},$$ where
$w_{t}(X_t,a) = \exp\pa{- \eta  \sum_{s=0}^{t-1} \langle X_t, \htheta_{s,a}\rangle}$ and using the estimator $$\ttheta^*_{t,a} = \II{A_t = a} S_{t,a}^{-1} X_t \iprod{X_t}{\theta_{t,a}},$$ where $S_{t,a} = \EEt{\pi_t(a|X_t) X_tX_t\transpose}$. 
Since \linexp  uses implicit exploration with probability $\gamma$,  $\lambda_{min}(S_{t,a}) \ge \lambda_{min}(\Sigma) \frac{\gamma}{K}$. But then, setting $\gamma = 0$, $S_{t,a}$ is still invertible as no actions have $\pi_{t}\left(a\middle|X_t\right)=0$. But still, the smallest eigenvalue $\lambdamin(S_{t,a})$ can be arbitrary small. Then, the analysis of the variance term in \linexp looks as:
\begin{align*}
	\EEt{\sum_{a=1}^K\pi_t(a|X_0) \siprod{X_0}{\htheta_{t,a}}^2 } &= \EEt{\sum_{a=1}^K \pi_t(a|X_0) \pa{X_0\transpose S_{t,a}^{-1} X_t X_t\transpose \theta_{t,a}\II{A_t = a} }^2} \\
	&\quad = \EE{ \ell_{t}(X_t, A_t)^2 \trace{ \pi_t(a|X_0)X_0X_0\transpose S_{t,a}^{-1}X_t X_t\transpose S_{t,a}^{-1}}}.
\end{align*}
We can define $Var'_t$ for \linexp in direct analogy to $Var_t$ for \clinexp above, which gives (almost surely):
\begin{align*}
	\EEw{Var_t'}{X_0} &= \EEw{\trace{ \pi_t(a|X_0)X_0X_0\transpose S_{t,a}^{-1}X_t X_t\transpose S_{t,a}^{-1}}}{X_0}\\
	&\quad =  \EEw{\trace{ \Sigma_{t,a} S_{t,a}^{-1}X_t X_t\transpose S_{t,a}^{-1}}}{X_0}  =  X_t\transpose S_{t,a}^{-1} X_t,
\end{align*} 
which can be arbitrary large. 

Meanwhile, the smallest eigenvalue $\lambdamin(\Sigma_{t,a})$ can be arbitrary small too. But, as we showed above in the analysis of \clinexp, $Var_t$ is bounded by $dK  \gamma^2$ because of the log-concavity of $Z_t(X_t)$ and step (\ref{truncation_event}) of \clinexp.

\end{document}